\documentclass{article}

\usepackage{microtype}
\usepackage{graphicx}
\usepackage{subcaption}
\usepackage{booktabs} 
\usepackage{multirow}
\usepackage{hyperref}

\usepackage[accepted]{icml2026}

\usepackage{amsmath}
\usepackage{amssymb}
\usepackage{mathtools}
\usepackage{amsthm}

\usepackage[capitalize,noabbrev]{cleveref}

\theoremstyle{plain}
\newtheorem{theorem}{Theorem}[section]
\newtheorem{proposition}[theorem]{Proposition}
\newtheorem{lemma}[theorem]{Lemma}

\theoremstyle{definition}

\theoremstyle{remark}

\usepackage[textsize=tiny]{todonotes}

\icmltitlerunning{Training-Free Coverless Multi-Image Steganography with Access Control}

\begin{document}

\twocolumn[
  \icmltitle{Training-Free Coverless Multi-Image Steganography with Access Control}



  \icmlsetsymbol{equal}{*}

  \begin{icmlauthorlist}
    \icmlauthor{Minyeol Bae}{yyy}
    \icmlauthor{Si-Hyeon Lee}{yyy}
  \end{icmlauthorlist}

  \icmlaffiliation{yyy}{School of Electrical Engineering, Korea Advanced Institute of
Science and Technology (KAIST), Daejeon 34141, South Korea}

  \icmlcorrespondingauthor{Si-Hyeon Lee}{sihyeon@kaist.ac.kr}

  \icmlkeywords{Image Steganography, Coverless Steganography, Diffusion Models, Access Control, Training-Free}

  \vskip 0.3in
]



\printAffiliationsAndNotice{}  

\begin{abstract}
Coverless Image Steganography (CIS) hides information without explicitly modifying a cover image, providing strong imperceptibility and inherent robustness to steganalysis. However, existing CIS methods largely lack robust access control, making it difficult to selectively reveal different hidden contents to different authorized users. Such access control is critical for scalable and privacy-sensitive information hiding in multi-user settings. We propose MIDAS (\textbf{M}ulti-\textbf{I}mage \textbf{D}iffusion-based \textbf{A}ccess-controlled \textbf{S}teganography), a training-free diffusion-based CIS framework that enables multi-image hiding with user-specific access control via latent-level fusion. {MIDAS introduces a Random Basis mechanism to suppress residual structural information, together with a theoretical analysis of information leakage, and a Latent Vector Fusion module that reshapes aggregated latents to better align with the diffusion process.}  Experimental results demonstrate that MIDAS consistently outperforms existing training-free CIS baselines in access control functionality, stego image quality and diversity, robustness to noise, and resistance to steganalysis, establishing a practical and scalable approach to access-controlled coverless steganography.
\end{abstract}

\section{Introduction}
\label{sec:intro}
In the era of pervasive internet connectivity and AI-generated content (AIGC), data privacy and security have become increasingly important concerns \cite{CRoSS}. Image steganography is a widely used technique for secure communication, where secret information is embedded into visually natural images in a perceptually imperceptible manner, such that only authorized users possessing the secret key or extraction mechanism can recover the hidden content \cite{prisoners}. Image steganography has been applied to a wide range of scenarios, including social media platforms \cite{social_stega}, edge computing \cite{edge_computing_stega}, quantum computing \cite{quantum_stega}, and efficient image compression \cite{stego_compress}.
\begin{figure}[t]
\begin{center}
\centerline{\includegraphics[width=\columnwidth]{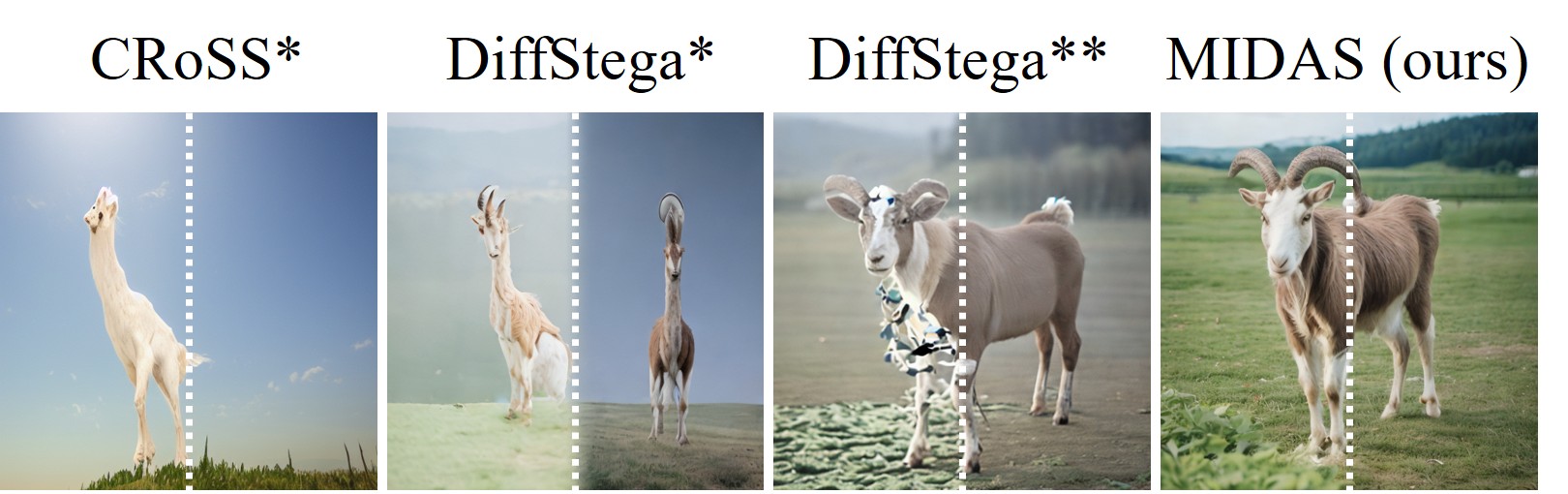}}
\caption{Comparison of stego image quality across training-free CIS methods {with two hidden secret images}.
{CRoSS*, DiffStega*, and DiffStega** are extensions of CRoSS and DiffStega based on latent concatenation} (see Section~\ref{sec:exp setting} and~Appendix \ref{appendix:impl_detail} for details). Dashed lines indicate concatenation boundaries. While baseline methods often suffer from visual artifacts or discontinuities due to limited diversity, MIDAS preserves high visual fidelity and natural image synthesis.}
\label{comparison}
\end{center}
\vskip -0.4in
\end{figure}


Traditional image steganography assumes a natural image, referred to as a cover image, and focuses on embedding bit-level messages by modifying the cover at the pixel or transform level \cite{spatial_LSB, DCT, JPEG_DCT}, yielding a modified image referred to as a stego image. These approaches are collectively known as modification-based methods. More recently, this modification-based paradigm has been extended to scenarios where secret images are embedded into a cover image \cite{Baluja}. To improve both reconstruction quality and embedding capacity, a growing body of work has leveraged advanced deep neural architectures \cite{HiNet, ISN, DeepMIH, RIIS}.


\setlength{\tabcolsep}{2pt}
\begin{table}[t]
\caption{Comparison of image steganography methods.}
\label{tab:baselines}
\begin{center}
{
\fontsize{8pt}{10pt}\selectfont
\begin{sc}
\begin{tabular}{ccccc}
\toprule
\normalfont Method & \normalfont Coverless & \normalfont Training & \normalfont Multi-image & \normalfont Access control \\
\midrule
\normalfont Baluja & & \normalfont moderate & \checkmark &  \\
\normalfont HiNet & & \normalfont moderate & & \\
\normalfont ISN & & \normalfont moderate & \checkmark & \\
\normalfont DeepMIH & & \normalfont moderate & \checkmark & \\
\normalfont RIIS & & \normalfont moderate & \checkmark & \\
\midrule
\normalfont Kweon et al. & & \normalfont moderate & \checkmark & \checkmark \\
\normalfont IIS & & \normalfont moderate & \checkmark & \checkmark \\
\normalfont AIS & & \normalfont moderate & \checkmark & \checkmark \\
\midrule
\normalfont CRoSS & \checkmark & \normalfont free & & \\
\normalfont DiffStega & \checkmark & \normalfont free & & \\
\normalfont DStyleStego & \checkmark & \normalfont free & & \\
\midrule
\normalfont HIS & \checkmark & \normalfont moderate & \checkmark & \\
\normalfont Qin et al. & \checkmark & \normalfont heavy & \checkmark & \\
\normalfont Chen et al. & \checkmark & \normalfont heavy & & \\
\midrule
\normalfont MIDAS(Ours) & \checkmark & \normalfont free & \checkmark & \checkmark \\
\bottomrule
\end{tabular}
\end{sc}
}
\end{center}
\vskip -0.2in
\end{table}
In multi-image hiding scenarios, multiple secret images are embedded into a single cover image to improve transmission efficiency. In practice, each secret image may be intended for a different receiver, which necessitates access control functionality where only authorized users can recover their designated secrets. Such access control is indispensable in the private sector for managing sensitive databases and protecting industrial secrets, as well as in organizations with differentiated security requirements, including government, military, and diplomatic sectors \cite{Access_control}. Recent works have explored modification-based approaches to enable access control in multi-image steganography \cite{Kweon, IIS, AIS}. However, it is well known that such modification-based methods render the resulting stego image vulnerable to steganalysis, particularly when the original cover image is leaked or predictable, thereby undermining the fundamental security of the hidden data.

Meanwhile, coverless image steganography (CIS) has emerged as an alternative paradigm that avoids modifying a cover image, often leveraging generative models, thereby achieving strong robustness against steganalysis. Representative CIS methods, such as CRoSS \cite{CRoSS} and DiffStega \cite{DiffStega}, are particularly appealing because they are training-free, which is crucial for real-world applications given the high computational cost and data requirements associated with training generative models. Furthermore, training-based schemes typically require distributing the entire trained model, which effectively acts as an impractically long secret key. In contrast, training-free schemes rely on publicly available models, enabling significantly more practical and efficient deployment. However, existing training-free CIS methods still suffer from two key limitations:

(1) \textbf{Absence of Multi-Image Hiding and Access Control}: 
Access control functionality inherently requires reliable and effective multi-image hiding; however, existing training-free CIS methods are primarily designed for single-image concealment, and naive extensions to multi-image hiding lead to severe performance degradation.

(2) \textbf{Limited Stego Image Diversity}: 
The generated stego images often preserve substantial residual structural information \cite{Concat_Mapping}. This limitation is particularly pronounced in naive concatenation-based extensions for access control scenarios (see Fig.~\ref{comparison}).



Several recent works \cite{Concat_Mapping, Qin, HIS, DStyleStego} have attempted to address these issues; however, they either require training or rely on the transmission of side information associated with the secret image (see Section \ref{sec:CIS} for details). As a result, a training-free CIS framework with access control capability that does not require transmitting any additional secret-related information is still lacking.


\begin{figure*}[ht]
\begin{center}
\center{\includegraphics[width=6.5in]{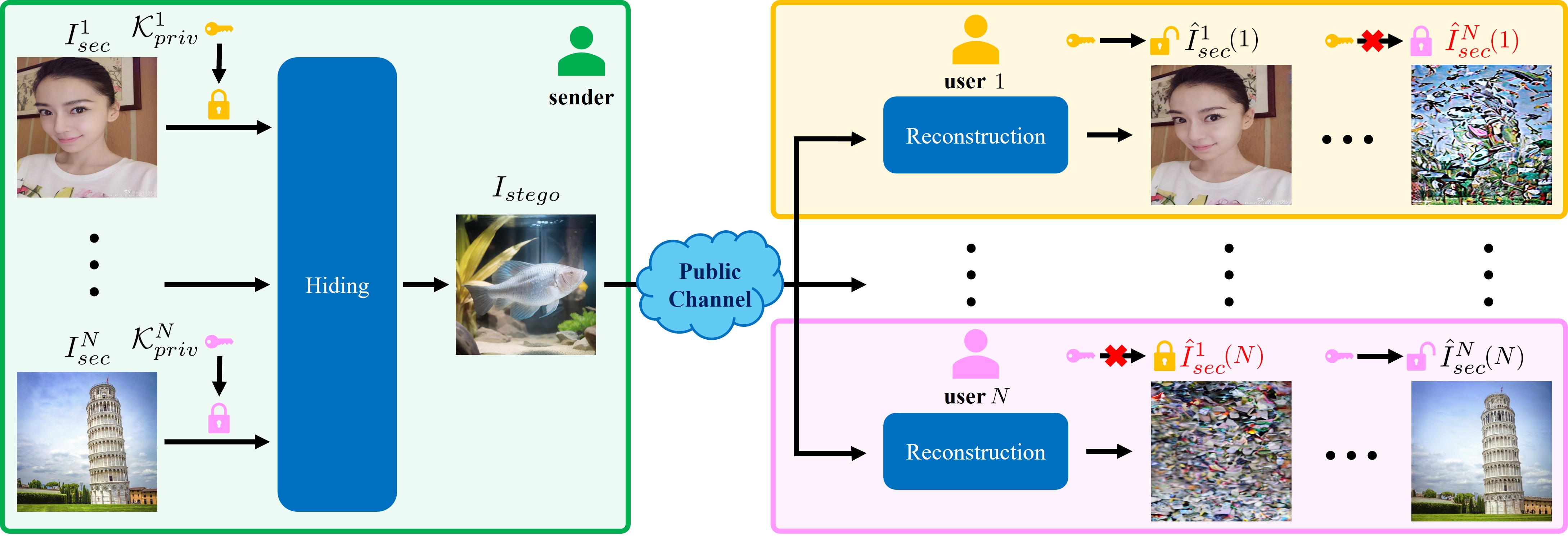}}
\caption{The overall scenario. The sender embeds multiple secret images ($I_{sec}^1, ..., I_{sec}^N$) into a single stego image ($I_{stego}$), which is then transmitted through a public channel. Each secret image ${I}_{sec}^i$ is encrypted with a private key ${\mathcal{K}}_{priv}^i$. Let $\hat{I}_{sec}^{j}(i)$ denote the reconstruction result of secret image~$j$ obtained by user~$i$ using $\mathcal{K}_{priv}^{i}$.
 Due to the access control function, only the intended receiver $i$ possessing ${\mathcal{K}}_{priv}^i$ can successfully recover $\hat{I}_{sec}^i(i)$, whereas for any other image $j \neq i$, the reconstruction $\hat{I}_{sec}^j(i)$ results in a meaningless output.}

\label{scenario}
\end{center}
\vskip -0.2in
\end{figure*}



To address this gap, we introduce a novel training-free CIS framework that supports access-controlled multi-image hiding without transmitting any auxiliary information related to the secret images. The main contributions of this work are summarized as follows:

\begin{itemize}
    \item 
  We propose \textbf{MIDAS} (\textbf{M}ulti-\textbf{I}mage \textbf{D}iffusion-based \textbf{A}ccess-controlled \textbf{S}teganography), a novel training-free CIS framework {for access-controlled multi-image hiding that leverages pre-trained diffusion models and seamlessly integrates into diffusion pipelines without modifying the underlying model parameters.}
    
    \item 
    We introduce a Latent Vector Fusion module and a key mechanism termed Random Basis. The Random Basis mechanism neutralizes the inherent structure of noisy latent representations {and is accompanied by an information-theoretic analysis of information leakage, while} the Latent Vector Fusion module reshapes them into representations that are more amenable to the diffusion process. Together, these components significantly improve both the perceptual quality and diversity of the generated stego images (see Fig.~\ref{comparison}). 
    
    \item  
 Through extensive experiments, we demonstrate that MIDAS consistently outperforms existing training-free CIS baselines in terms of access control functionality, stego image quality and diversity, robustness to noise, and resistance to steganalysis.
\end{itemize}

{Table~\ref{tab:baselines} summarizes the distinguishing features of our work in comparison with existing studies, and Section~\ref{sec:related work} presents a more detailed review of the related literature.}\footnote{The source code of the proposed algorithm is available at https://github.com/Minyeol/MIDAS.}

\paragraph{Conflict of Interest Disclosure.} The authors have no conflicts of interest to declare.

\section{Related Work and System Model}
\label{sec:system model and related work}
\subsection{Related Work}
\label{sec:related work}
\subsubsection{{Cover-Based Image} Steganography} 
\label{sec:image stega}

{The paradigm of} hiding secret images within other images has gained widespread interest, a concept first introduced by Baluja \yrcite{Baluja} using deep neural networks. Building on this foundation, several studies have adopted Invertible Neural Networks (INNs) to improve reconstruction quality, including HiNet \cite{HiNet}, ISN \cite{ISN}, DeepMIH \cite{DeepMIH}, and RIIS \cite{RIIS}. 

Despite these advancements, {achieving access-controlled multi-image hiding remains a significant challenge; incorporating} user-selective {private} keys often requires additional metadata that can degrade the quality of both stego and reconstructed images \cite{AIS}.
{Kweon et al. \yrcite{Kweon} integrate {the private} keys into multi-image steganography, providing a foundation for access control. Following this, IIS \cite{IIS} and AIS \cite{AIS} employ INN-based architectures to further enhance reconstruction quality and capacity. Nevertheless, these methods are modification-based and thus inherently vulnerable to steganalysis. In contrast, MIDAS achieves access control} in a coverless manner, thereby fundamentally enhancing security by eliminating the need for cover image modification.

\subsubsection{Coverless Image Steganography}
\label{sec:CIS}
CIS is fundamentally robust against steganalysis as it avoids cover image modification. 
Many CIS schemes utilize generative models to synthesize stego images directly from secret images. However, optimizing high-quality generators is a resource-intensive process. In this regard, training-free approaches offer significant advantages. CRoSS \cite{CRoSS} utilizes the {Denoising Diffusion Implicit Model (DDIM) \cite{DDIM} inversion process {to hide data in two stages: a secret image is first mapped to a noisy latent using a private prompt (private key), then transformed into a stego image via the backward process guided by a public prompt (public key). The secret image is subsequently retrieved by exploiting the inherent reversibility of DDIM.} 
Note that CRoSS faces notable challenges: it relies on secret image-dependent private prompts that must be transmitted for each session, and its limited editing capabilities often leave residual structural information in the stego image, thereby hindering naive extensions to multi-image hiding.
DiffStega \cite{DiffStega} eliminates the need for private prompts by replacing them with a fixed null-text prompt and applying a private key mechanism directly to noisy latents. While it improves stego image quality by integrating a reference image and ControlNet \cite{controlnet}, its editing capability remains restricted. DStyleStego \cite{DStyleStego} improves editing performance, but it requires the transmission of information related to the secret image and remains restricted to single-image hiding. Consequently, these training-free methods are either not readily extendable to multi-image hiding or require the transmission of secret-related information.


Some recent CIS methods have attempted to overcome  the aforementioned limitations through specialized training. For instance, Chen et al. \yrcite{Concat_Mapping} and Qin et al. \yrcite{Qin} {employ steganography-specific generators to enhance editing capabilities, with the latter further enabling multi-image hiding scenarios.} However, these methods entail substantial computational overhead due to their reliance on high-quality generator training.
Another approach, HIS \cite{HIS}, generates a stego image via CRoSS to serve as a cover  for subsequent modification-based steganography {to enable multi-image hiding}, while still necessitating the transmission of {additional information related to secret images} for every communication. Furthermore, none of the aforementioned approaches provide an explicit access control function. In contrast, MIDAS enables access-controlled {CIS} even within a training-free framework {without requiring the transmission of secret-related information, thereby} offering a practical and secure solution for real-world scenarios.

\subsection{System Model}
\label{sec:system model}

We consider a multi-image access-controlled steganography scenario with a single sender and $N$ receivers (users), as illustrated in Fig.~\ref{scenario}. The sender possesses $N$ secret images $(I_{sec}^1, ... , I_{sec}^N)$ and $N$ private keys $(\mathcal K_{priv}^1, ..., \mathcal K_{priv}^N)$, and the $i$-th receiver has the private key $\mathcal K_{priv}^i$. We further allow for the existence of certain public resources shared between the sender and the receivers. Under a strong attacker model, these public resources are assumed to be fully accessible to the attacker as well. 
All the secret images are jointly embedded into a single stego image $I_{stego}$ using the private keys and the public resources, which is transmitted over a public channel. The resulting stego image is required to be visually indistinguishable from a natural image.
Upon receiving $I_{{stego}}$, each receiver attempts to reconstruct the embedded secret images. The $i$-th receiver, who possesses only the private key $\mathcal{K}_{priv}^i$, is required to obtain a faithful reconstruction $\hat I_{sec}^i(i)$ of its designated secret image $I_{{sec}}^i$. In contrast, for any other image $j \neq i$, the reconstructed output $\hat I_{sec}^j(i)$ by the $i$-th receiver should reveal no useful information about $I_{sec}^j$. {Accordingly, receivers without the corresponding private key are also treated as attackers in our access-control model, and should gain no useful information about unauthorized secret images.}


\section{Method}
\label{sec:method}
\subsection{Overview}
\label{sec:overview}

This section introduces MIDAS, a novel access-controlled image steganography method utilizing a diffusion model. 
For both Hiding and Reconstruction Stages, MIDAS employs a latent diffusion model \cite{LDM} with a DDIM sampler \cite{DDIM}. The inherent inversion property of DDIM is utilized to enable the reconstruction of secret images. In addition, it integrates EDICT \cite{EDICT}, an exact inversion method, to further improve {the reconstruction quality}. In MIDAS, private keys are given as random seeds. The public resources consist of a public key $\mathcal K_{pub}$ corresponding to an additional random seed and a public prompt $\mathcal{P}_{pub}$, which together facilitate the generation of stego images. The model operates in a latent space with $C \times H \times W$ dimensions, where $C$ denotes the channel dimension and $H$ and $W$ are the spatial dimensions of the latent representation. The Hiding and Reconstruction Stages of MIDAS are illustrated in Fig.~\ref{architecture} of Appendix~\ref{appendix:architecture}, and are described in Sections~\ref{sec:hiding stage} and~\ref{sec:recon stage}, respectively. 

In MIDAS, a Reference Generator (RefGen), utilizing $\mathcal K_{pub}$ and $\mathcal P_{pub}$, generates a reference image $I_{ref}$. $I_{ref}$ is essential for enhancing the visual quality of $I_{stego}$. Since $I_{ref}$ is generated deterministically from public resources ($\mathcal K_{pub}$ and $\mathcal P_{pub}$), it does not need to be explicitly transmitted. The detailed operation and architecture of RefGen are discussed in Section \ref{sec:RefGen}.

\subsection{Hiding Stage}
\label{sec:hiding stage}
In this stage, $N$ secret images ($I_{sec}^1, ..., I_{sec}^N$) are first individually encrypted using their corresponding private keys ($\mathcal K_{priv}^1, ..., \mathcal K_{priv}^N$). The resulting encrypted data are then jointly processed and fused to a single stego image ($I_{stego}$). This stage consists of the following steps.

\paragraph{Forward Diffusion.} Each secret image $I_{sec}^i$ is first downsampled and mapped to a noisy latent representation $\textbf{z}_{sec}^i\in \mathbb R^{C\times \frac H {N_1}\times \frac W {N_2}}$ using a forward diffusion module, guided by a null-text prompt. The downsampling factors, $N_1$ and $N_2$ (where $N_1N_2 = N$), are strategically chosen to ensure that the concatenated latent vector of all $N$ secret images precisely matches the base latent dimension of the diffusion model, $\mathbb{R}^{C \times H \times W}$, e.g., $N_1 = 2$ and $N_2=1$ when $N=2$, and $N_1=N_2=2$ when $N=4$.

\paragraph{Private Key Encryption.} For user access control, each $\textbf{z}_{sec}^i$ is encrypted into $\textbf{z}_{prot}^i\in \mathbb R^{C\times \frac H {N_1}\times \frac W {N_2}}$ using the private key $\mathcal K_{priv}^i$. For any $d$-dimensional input vector $\textbf{z}$, we consider an encryption strategy that can be expressed in the following form:
\begin{equation}
\label{eq: Random Basis}
    \textbf{z}_{enc} = M_d\textbf{z},    
\end{equation}
where $\textbf{z}_{enc}\in\mathbb R^d$ is the encrypted latent vector, $\textbf{z}\in\mathbb R^d$ is the vector to be encrypted, and $M_d\in\mathbb R^{d\times d}$ is an orthonormal matrix chosen by the encryption strategy.  Due to the orthonormal property of $M_d$, the original vector can be perfectly reconstructed as $\textbf{z}=M_d^T\textbf{z}_{enc}$, where $M_d^T$ denotes the transpose of $M_d$. 
The previously suggested Noise Flip \cite{DiffStega} mechanism, which flips the signs of the elements of the input vector, used $M_d=\text{diag}(e)$ with $e\in\{-1,1\}^d$. 

We {employ a Random Basis mechanism}, which constructs $M_d=Q_d(\mathcal K, \gamma)$, where $Q_d(\mathcal K, \gamma)$ is an orthonormal matrix generated from the seed $\mathcal K$ and the strength $\gamma$. The hyperparameter $\gamma$ determines the proportion of $\mathbf{z}$'s elements that are affected by the transformation $Q_d$, i.e., $\gamma$ portion of $\textbf{z}$'s elements are transformed by $Q_d$ while the remaining elements are left identical. Further details regarding the construction of $Q_d$ can be found in Appendix \ref{appendix:random_basis}. 

By applying the Random Basis to $\textbf{z}_{sec}^i$ with the private key $\mathcal K_{priv}^i$ and strength $\gamma_{priv}$, we have $\textbf{z}_{prot}^i=Q_d(\mathcal K_{priv}^i, \gamma_{priv})\textbf{z}_{sec}^i$. 
{Despite its conceptual simplicity, t}he Random Basis empirically demonstrates superior performance compared to Noise Flip; further details are provided in {Appendix} \ref{appendix:key_mechanism}.

\paragraph{Latent Vector Fusion.} To enable multi-image hiding within a single transmission, the sender concatenates all $\textbf{z}_{prot}^i$ vectors to form a combined latent vector, denoted as $\textbf{z}_{prot}\in \mathbb R^{C\times H\times W}$. However, simple concatenation results in a stego image with abrupt and visible boundaries at the concatenated interfaces (see Fig. \ref{comparison}). This phenomenon occurs because residual information of the original secret images remains within the noisy latent representations ($\textbf{z}_{prot}^i$) even after the DDIM forward process \cite{seed_to_seed}. Consequently, the backward diffusion model struggles to smoothly blend these structurally distinct latent segments. This results in a stego image that is visibly fragmented along the concatenation seam, rendering the output image unnatural and unsuitable for steganography. 

To mitigate this, a Latent Vector Fusion step utilizing the public key $\mathcal K_{pub}$ is applied. This process shuffles the concatenated latent vector $\mathbf{z}_{prot}$ and incorporates some reference component, thereby ensuring the generation of a visually high-quality stego image. The Latent Vector Fusion process is defined as: 
\begin{equation}
\label{eq:LVF}
    \textbf{z}_{pub}=\sqrt{\alpha} M_{D}\textbf{z}_{prot} + \sqrt{1-\alpha} \textbf{z}_{ref}.    
\end{equation}
Here, $\textbf{z}_{pub}\in \mathbb R^{C\times H\times W}$ is the resulting latent vector, $\textbf{z}_{prot}$ is the concatenated vector. 
$M_{D}=Q_{D}(\mathcal K_{pub}, \gamma_{fuse})\in\mathbb R^{D\times D}$ is constructed via the Random Basis mechanism with input dimension $D=C \times H \times W$, public key $\mathcal K_{pub}$, and strength $\gamma_{fuse}$. This transformation serves to mix the spatial components of $\mathbf{z}_{prot}$, thereby removing the localized spatial information caused by the concatenation. $\textbf{z}_{ref}\in\mathbb R^{C\times H\times W}$ is the noisy latent vector of the reference image $I_{ref}$, used to further enhance the stego image generation quality. $\alpha$ is a mixing coefficient that controls the proportions of the transformed vector component ($M_{D}\mathbf{z}_{prot}$) and $\mathbf{z}_{ref}$.

\paragraph{Reverse Diffusion.} The stego image $I_{stego}$ is generated by applying the reverse diffusion process to $\textbf{z}_{pub}$ conditioned on the public prompt $\mathcal P_{pub}$ and the reference image $I_{ref}$. $I_{stego}$ is then transmitted through the public channel. 
\subsection{Reconstruction Stage}
\label{sec:recon stage}
In this stage, each user reconstructs their corresponding secret image. Crucially, this stage verifies the user's identity using their private key, thereby ensuring that only the intended recipient can correctly decode their assigned secret image from $I_{stego}$. Since $I_{stego}$ may be damaged during transmission, each user receives $\tilde I_{stego}$, a potentially degraded version of $I_{stego}$. {This stage} proceeds through the following steps.

\paragraph{Forward Diffusion.} Using DDIM inversion, the received image $\tilde I_{stego}$ is inverted to infer $\tilde{\textbf{z}}_{pub}\in\mathbb R^{C\times H\times W}$, an approximation of $\textbf{z}_{pub}$. $\mathcal P_{pub}$ and $I_{ref}$ are used as conditions for this DDIM inversion.

\paragraph{Latent Vector Decomposition.} In this step, each user performs the inverse operation of the Latent Vector Fusion. From $\tilde{\textbf{z}}_{pub}$, the user first remove the $\textbf{z}_{ref}$ component and then apply the inverse Random Basis transformation to obtain $\hat{\textbf{z}}_{prot}\in\mathbb R^{C\times H\times W}$. This decomposition process is represented by the following equation:
\begin{equation}
 \hat{\textbf{z}}_{prot} = M_D^T \left(\frac{\tilde{\textbf{z}}_{pub} - \sqrt{1-\alpha}\textbf{z}_{ref}}{\sqrt{\alpha}}\right)
\end{equation}

\paragraph{User Access Control.} The vector $\hat{\textbf{z}}_{prot}$ consists of $N$ segments, where the $i$-th segment corresponds to $I_{sec}^i$. User $i$ attempts to decrypt all $N$ segments using its private key $\mathcal K_{priv}^i$, yielding $\hat{\textbf{z}}_{sec}(i)\in\mathbb R^{C\times H\times W}$. Crucially, since only the $i$-th segment was originally protected by $\mathcal K_{priv}^i$, only this segment's decryption yields a meaningful latent representation. This difference in decryption quality across segments implements the access control function.

\paragraph{Backward Diffusion and Image Reconstruction.} We utilize a joint denoising approach on the full vector $\mathbf{\hat{z}}_{sec}{(i)}$ through the DDIM backward process, conditioned on the null-text prompt. The resulting vector is then segmented. This method demonstrates superior performance compared to segmenting first and then applying the DDIM backward process to each segment. The detailed description of this diffusion policy is discussed in {Appendix \ref{appendix:diff_rule}}. 

The correctly decrypted $i$-th segment, $\hat{\textbf{z}}_{sec}^i{(i)}\in \mathbb R^{C\times \frac H {N_1}\times \frac W {N_2}}$ serves as an approximation of the true secret latent vector $\textbf{z}_{sec}^i$. This $\hat{\textbf{z}}_{sec}^i{(i)}$ is passed into the VAE decoder and upsampling block to obtain the final reconstructed image, $\hat I_{sec}^i{(i)}$. If user $i$ attempts to reveal another user's image using the non-decrypted segment $\hat{\textbf{z}}_{sec}^j{(i)}\in \mathbb R^{C\times \frac H {N_1}\times \frac W {N_2}}$ $(j\neq i)$, the resulting image $\hat I_{sec}^j{(i)}$ will exhibit significant degradation, as $\hat{\textbf{z}}_{sec}^j{(i)}$ significantly deviates from the true latent vector $\textbf{z}_{sec}^j$.

\subsection{Reference Generator}
\label{sec:RefGen}

In this subsection, we provide the details of RefGen. RefGen is a pre-trained diffusion model that takes the public resources ($\mathcal K_{pub}$ and $\mathcal P_{pub}$) as input to generate the reference image $I_{ref}$. The process begins by using $\mathcal K_{pub}$ to generate the initial Gaussian noise. Since the diffusion process is deterministic when the random seed (here, $\mathcal K_{pub}$) is fixed, the diffusion model always generates the identical image $I_{ref}$ from the same $\mathcal K_{pub}$ and $\mathcal P_{pub}$. This deterministic property is critical, as it eliminates the need to explicitly transmit $I_{ref}$, allowing both the sender and the designated recipient to locally reproduce $I_{ref}$.

Prior work \cite{DiffStega} leverages ControlNet \cite{controlnet} to condition the $I_{ref}$ generation using a control image (e.g., the OpenPose bone image or semantic segmentation map) extracted from the secret image. This methodology, however, presents two critical limitations. First, note that the control image must be shared publicly. Thus, publicly transmitting the structural control image compromises the covertness by exposing secret image information. Second, the rigid conditioning significantly restricts the generated image's diversity and flexibility, often leading to poor quality when the control image and prompt are semantically inconsistent. To address these issues, we deliberately omit the use of control images in MIDAS. {Instead, we achieve the high-quality conditional generation required for steganography by incorporating a Random Basis along with a Latent Vector Fusion mechanism.}
The combination of Random Basis and Latent Vector Fusion allows us to effectively embed the secret information while ensuring high fidelity, thereby eliminating the need for the external, restrictive conditioning provided by control images.

\subsection{Security Analysis}
We theoretically demonstrate that the proposed Random Basis mechanism can effectively control information leakage. For simplicity, we consider the case of single-image setting, where the sender transmits a secret image $I_{sec}$ and an attacker produces a reconstruction $\hat I_{sec}$. Following standard information-theoretic definitions \cite{NIT}, we define the information leakage rate as a security measure:
\begin{equation}
    R_L=\frac{1}{m}I(I_{sec};\hat I_{sec})
\end{equation}
where $I(\cdot;\cdot)$ denotes mutual information and $m$ is the dimension of $I_{sec}$. 

We establish the following theorem.
\begin{theorem}
    \label{thm:random_basis}
    Assume that all variables are quantized with sufficiently small step size $\Delta$. For sufficiently large $m$, the information leakage under the Random Basis mechanism with strength $\gamma>0$ satisfies
 \begin{equation}
        R_L 
        \approx O\left( \frac{-\log \Delta+ \log {m}}{{m}}+(1-\gamma)(-\log\Delta+1)\right). \label{leakage}
    \end{equation}
\end{theorem}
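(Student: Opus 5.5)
The plan is to reduce the pipeline to a Markov chain and bound the leakage through the single step the attacker cannot undo, namely the private Random Basis rotation. Since the attacker's reconstruction is produced from the public stego image, the chain is $I_{sec}\to \mathbf{z}_{sec}\to \mathbf{z}_{enc}=Q_m(\mathcal K,\gamma)\mathbf{z}_{sec}\to I_{stego}\to \hat I_{sec}$. The key $\mathcal K$ is independent of everything the attacker holds, so the data processing inequality gives $I(I_{sec};\hat I_{sec})\le I(\mathbf{z}_{sec};\mathbf{z}_{enc})$.

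Following the construction in Appendix~\ref{appendix:random_basis}, we split the coordinates of $\mathbf{z}_{sec}$ into two groups:
\begin{itemize}
\item an untouched part $\mathbf{z}_U$ of dimension $(1-\gamma)m$, which is copied verbatim into $\mathbf{z}_{enc}$;
\item a transformed part $\mathbf{z}_T$ of dimension $\gamma m$, which is multiplied by a key-dependent random orthonormal matrix (Haar-like).
\end{itemize}
The selection of the transformed index set is also key-dependent. To absorb this, we condition on it and pay at most its entropy. Treating the selection as a permutation or position index then costs $O(\log m)$ per coordinate-choice, or we simply bound it by the cost of describing the set.

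The chain rule then gives
\begin{equation*}
I(\mathbf{z}_{sec};\mathbf{z}_{enc})\le H(S)+I(\mathbf{z}_{sec};\mathbf{z}_U)+I(\mathbf{z}_{sec};Q\mathbf{z}_T\mid \mathbf{z}_U,S).
\end{equation*}
The terms are bounded as follows.
\begin{itemize}
\item \textbf{Untouched part.} We have $I(\mathbf{z}_{sec};\mathbf{z}_U)\le H(\mathbf{z}_U)$. After quantization with step $\Delta$, and with entries bounded or of unit variance (the latents are approximately standard Gaussian after DDIM inversion), each coordinate has entropy at most $h(Z)-\log\Delta\le -\log\Delta+O(1)$. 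This yields $(1-\gamma)m(-\log\Delta+1)$ up to constants.
\item \textbf{Rotated part.} Since $Q$ is uniformly random on the orthogonal group and independent of $\mathbf{z}_T$, the output $Q\mathbf{z}_T$ is uniform on the sphere of radius $\|\mathbf{z}_T\|$. Hence $Q\mathbf{z}_T$ depends on the input only through its norm, and the data processing inequality gives $I\le H(\|\mathbf{z}_T\|_\Delta)$. The quantized norm lies in a range of size $O(\sqrt m/\Delta)$, since its typical value is $\sqrt{\gamma m}$. It therefore has entropy $O(\log m-\log\Delta)$.
\end{itemize}
Dividing by $m$ gives the stated order,
\begin{equation*}
\frac{-\log\Delta+\log m}{m}+(1-\gamma)(-\log\Delta+1).
\end{equation*}
The approximation sign accounts for continuous-to-quantized entropy conversion, $h(X_\Delta)\approx h(X)-\log\Delta$, which holds for small $\Delta$.

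The main obstacle is the rotated-part step, together with the $H(S)$ term. Two issues arise there.
\begin{itemize}
\item \textbf{Non-Haar matrices.} The paper's $Q$ may not be exactly Haar-distributed, and after quantization exact rotation invariance fails. For large $m$, we would argue that the quantized output's dependence on the input is mediated by the norm up to $o(1)$ per-dimension effects. Failing that, we would assume $Q$ is Haar for the analysis.
\item \textbf{Cost of the index set.} If the transformed index set were uniformly random, $H(S)$ could be as large as $m h(\gamma)$, which would break the bound. So we need either that the subset is fixed or public given $\gamma$, or that it is determined by an $O(\log m)$-bit description such as contiguous blocks or a public permutation. We would check Appendix~\ref{appendix:random_basis} and adopt that structure.
\end{itemize}
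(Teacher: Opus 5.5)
Your proposal is correct and follows essentially the paper's argument. Both proofs proceed in the same steps:
\begin{itemize}
\item data processing down to the quantized latent;
\item an independent split into a rotated block and an untouched block;
\item a Haar rotation that leaks only the radius $r=\|\mathbf z_T\|_2$, which the paper evaluates as $\mathbb H_{d(r)}(r)-\log\Delta=O(\log d-\log\Delta)$ via R\'enyi's information-dimension approximation and the mutual independence of $r,\boldsymbol\theta,\boldsymbol\theta_Q$, where you count the range of the quantized norm instead;
\item the untouched block contributing its full quantized entropy $\approx(1-\gamma)d(-\log\Delta+O(1))$, with $d<m$.
\end{itemize}
The paper handles your first obstacle exactly as your fallback does: its abstraction models $Q$ as Haar and the partition as fixed. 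Strictly, the quantized output depends on the unquantized radius, so $I\le H(\|\mathbf z_T\|_\Delta)$ is not a literal data-processing step. It does hold at the same ``$\approx$'' level of rigor as the paper's computation.

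The one thing to correct is the $H(S)$ term. It is spurious, and the obstacle you raise about a uniformly random index set is not real. The key material $S$ is independent of $\mathbf z_{sec}$, so
$I(\mathbf z_{sec};\mathbf z_{enc})\le I(\mathbf z_{sec};\mathbf z_{enc},S)=I(\mathbf z_{sec};S)+I(\mathbf z_{sec};\mathbf z_{enc}\mid S)=I(\mathbf z_{sec};\mathbf z_{enc}\mid S)$.
Handing the combinatorial part of the key to the attacker therefore costs nothing, and you then run the fixed-partition argument for each value of $S$.

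This matters for the actual construction in Appendix~\ref{appendix:random_basis}. There the rotated input block is the fixed first $\lfloor\gamma d\rfloor$ coordinates, but the output rows are shuffled by a key-seeded permutation. Modelled as uniform, that permutation has entropy $\log d!$. Charging it through $H(S)$ would add roughly $\frac{d}{m}\log d$ per dimension. This grows like $\log m$ when $d\propto m$, so it is not covered by the stated bound. The conditioning argument removes this cost for free.
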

In practice, $\Delta \approx 10^{-7}$ (e.g., float32 precision), implying that $-\log \Delta$ is on the order of $10$. For a typical $512 \times 512 \times 3$ image, the dimension is on the order of $m \approx 10^6$. Hence, the first term on the RHS of~\eqref{leakage} becomes negligible. The second term decreases as $\gamma$ approaches $1$. Empirically, even with $\gamma = 0.4$, the proposed method provides sufficient security (see Table~\ref{tab:main}).

\begin{proof}[Sketch of proof]
  By the data processing inequality, $I(I_{sec};\hat I_{sec})$ is upper-bounded by the mutual information between the noisy latent representations of $I_{sec}$ and $\hat I_{sec}$. Therefore, securing the noisy latent representation is sufficient to control the overall information leakage. To reflect practical settings while maintaining analytical tractability, we consider quantized noisy latent representations. Due to the high dimensionality of the latent space, we leverage tools from high-dimensional probability to characterize the scaling behavior of the mutual information with respect to $\Delta$ and $m$, which leads to the stated bound. The detailed proof is provided in Appendix~\ref{appendix:proof}.
\end{proof}

\section{Experiment}
\label{sec:experiment}
\subsection{Experimental Settings}
\label{sec:exp setting}
\paragraph{Implementation.} Consistent with prior work \cite{DiffStega}, we utilize Stable Diffusion v1.5\footnote{https://huggingface.co/runwayml/stable-diffusion-v1-5} for both the forward diffusion in the Hiding Stage and backward diffusion in the Reconstruction Stage. PicX\_real\footnote{https://huggingface.co/GraydientPlatformAPI/picx-real} is employed for RefGen. Further details are provided in Appendix \ref{appendix:impl_detail}. 

\paragraph{Datasets.} We employ the Stego260 \cite{CRoSS} and UniStega \cite{DiffStega} datasets, which are specifically designed for diffusion steganography. These datasets include the images with their text descriptions (source caption) and semantically modified text descriptions (target caption). We only use the target caption for $\mathcal P_{pub}$ and a null-text is used for private prompt. These datasets are directly adopted for the $N=1$ case. To align the dataset with our multi-image hiding scenario (which requires $N$ secret images per stego image), we randomly sample $N$ images as secret images and independently select a single text prompt from the entire set of prompts in the dataset. Unless otherwise specified, all experiments are conducted with $N=2$.

\paragraph{Metrics.} We evaluate image hiding and reconstruction quality using Peak Signal-to-Noise Ratio (PSNR), Structural Similarity Index (SSIM) \cite{SSIM}, and Learned Perceptual Image Patch Similarity (LPIPS) \cite{LPIPS}. The CLIP Score \cite{CLIP_score} measures the semantic alignment between $I_{stego}$ and $\mathcal P_{pub}$. In addition, Multi-dimension Attention Network for no-reference Image Quality Assessment (MANIQA) \cite{MANIQA} is employed to assess the perceived visual naturalness of the generated stego images, reflecting human judgment. All quantitative results are reported as the average over three independent trials with different random seeds.

\begin{figure*}[ht!]
\begin{center}
\center{\includegraphics[width=6.5in]{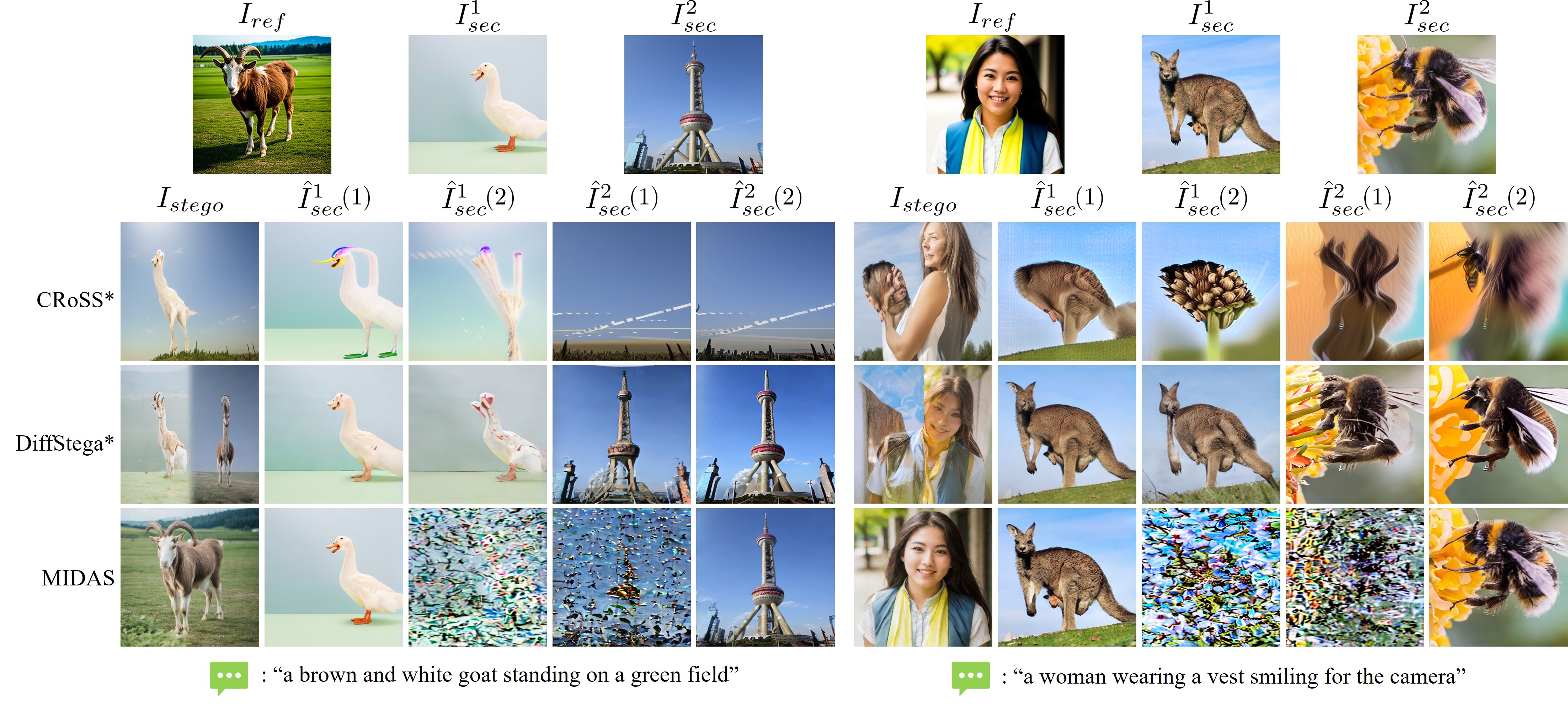}}
\caption{Visual comparisons of MIDAS, CRoSS*, and DiffStega* on the Stego260 dataset under different prompts for two-image hiding.}
\label{qual}
\end{center}
\vskip -0.2in
\end{figure*}
\paragraph{Baselines.} For training-free CIS baselines, we consider CRoSS \cite{CRoSS} and DiffStega \cite{DiffStega}, {detailed} in Section \ref{sec:CIS}. To evaluate the multi-image hiding scenario (i.e., $N>1$), we adapted CRoSS and DiffStega into CRoSS* and DiffStega*, respectively. For CRoSS*, we utilized source captions as private keys and concatenated the noisy latents of the secret images. DiffStega* was modified by applying Noise Flip with $\mathcal K_{priv}$ to each noisy latent before concatenation, followed by an additional Noise Flip with $\mathcal K_{pub}$ on the resulting vector. For both CRoSS* and DiffStega*, the public prompt $\mathcal{P}_{pub}$ is employed to generate the stego images. {While} the original DiffStega utilizes control images, we {omit} them in DiffStega* for the reasons discussed in Section \ref{sec:RefGen}. For consistency, the names CRoSS* and DiffStega* are used throughout the paper, even for the $N=1$ case where they remain functionally identical to the original models. Detailed implementation settings are provided in Appendix \ref{appendix:impl_detail}.

We maintained the original parameters specified in the baseline studies for both CRoSS* and DiffStega*. In contrast, we also evaluated these baselines using the {settings adjusted to MIDAS} within our ablation studies (see {Appendix} \ref{sec:eff_LVF}). This allowed us to assess the contribution of Latent Vector Fusion, validating their superiority over the conventional approaches even under identical experimental conditions.


For access-controlled image steganography baselines, we consider IIS \cite{IIS} and AIS \cite{AIS}, even though they are modification-based (neither coverless nor training-free) methods.

\subsection{Experimental Results}
\label{sec:exp results}
\paragraph{Qualitative Results.} Fig. \ref{qual} provides a visual comparison among training-free CIS methods. Due to the limited editing capability of CRoSS* and DiffStega*, stego images generated by these methods exhibit noticeable visual fragmentation and degraded quality. Furthermore, when extended to the multi-image scenario, CRoSS* struggles significantly to reconstruct the secret images. While DiffStega* shows relatively better reconstruction with the correct $\mathcal K_{priv}$ (e.g., $\hat I_{sec}^1(1), \hat{I}_{sec}^2(2)$), it fails to enforce access control, as meaningful reconstructions are still achievable even with incorrect $\mathcal K_{priv}$ (e.g., $\hat I_{sec}^1(2), \hat{I}_{sec}^2(1)$). In contrast, MIDAS produces high-fidelity stego images and demonstrates robust access control; it provides superior reconstruction quality for authorized users while yielding reconstruction failures when an incorrect $\mathcal K_{priv}$ is applied. More results are available in Appendix \ref{appendix:4in1}. 

\begin{table*}[ht!]
\vskip 0.1in
\caption{Quantitative results for stego and reconstructed images on the Stego260 dataset. {Best results among training-free CIS methods are highlighted in \textbf{bold}.}}
\label{tab:main}
\begin{center}
\begin{small}
\begin{sc}
\setlength{\tabcolsep}{1pt}
\begin{tabular*}{\textwidth}{cc ccc cc ccc ccc ccc}
\toprule
\multirow{2}{*}{\normalfont $N$} & 
\multirow{2}{*}{\normalfont Method} & 
\multicolumn{1}{c}{\normalfont Stego image quality} & 
\multicolumn{4}{c}{\normalfont Stego image diversity} & 
\multicolumn{3}{c}{\normalfont Correct $\mathcal K_{priv}$ reconstruction} & 
\multicolumn{3}{c}{\normalfont Wrong $\mathcal K_{priv}$ reconstruction} &  \\
\cmidrule(lr){3-3} \cmidrule(lr){4-7} \cmidrule(lr){8-10} \cmidrule(lr){11-13} \cmidrule(lr){14-16}
& &\normalfont \normalfont MANIQA$\uparrow$ &PSNR$\downarrow$ & \normalfont SSIM$\downarrow$ & \normalfont LPIPS$\uparrow$ & \normalfont CLIP Score$\uparrow$ &  \,\,\,\normalfont PSNR$\uparrow$ & \,\,\,\,\,\normalfont SSIM$\uparrow$ \,& \normalfont LPIPS$\downarrow$ \!& \normalfont \,\,\,PSNR$\downarrow$ \,\,\,& \normalfont SSIM$\downarrow$ \!\!& \normalfont LPIPS$\uparrow$\\
\midrule
\multirow{5}{*}{1} 
& \normalfont IIS & - & - & - & - & - & 48.588 & 0.999 & 0.016 & 10.551 & 0.107 & 0.818\\
& \normalfont AIS & - & - & - & - & - & 35.990 & 0.999 & 0.127 & 22.981 & 0.595 & 0.367\\
\cmidrule(lr){2-16}
&{\normalfont CRoSS*} & 0.409 & 20.535 & 0.740 & 0.322 & 27.937 & 22.870 & 0.796 & 0.263 & 18.387 & 0.650 & 0.430\\
& \normalfont DiffStega* & \textbf{0.450} & {19.686} & {0.664} & {0.419} & {28.871} & {24.958} & \textbf{0.833} & \textbf{0.232} & {19.447} & {0.650} & {0.397}\\
& \normalfont MIDAS(Ours) & {0.429} & \textbf{13.407} & \textbf{0.419} & \textbf{0.610} & \textbf{29.686} & \textbf{25.161} & {0.831} & {0.234} & \textbf{12.718} & \textbf{0.237} & \textbf{0.698}\\
\hline 
\rule{0pt}{1em}
\multirow{5}{*}{2} 
& \normalfont IIS & - & - & - & - & - & 41.360 & 0.995 & 0.070 & 11.883 & 0.219 & 0.808\\
& \normalfont AIS & - & - & - & - & - & 30.724 & 0.973 & 0.317 & 4.540 & 0.126 & 0.847 \\
\cmidrule(lr){2-16}
& \normalfont CRoSS* & {0.406} & {15.550} & {0.521} & {0.579} & 26.071 & 17.606 & 0.563 & 0.496 & {15.270} & {0.470} & {0.576} \\
& \normalfont DiffStega* & 0.399 & 17.065 & 0.531 & 0.555 & {26.952} & {21.908} & {0.728} & {0.344} & 18.137 & 0.587 & 0.458\\
& \normalfont MIDAS(Ours) & \textbf{0.434} & \textbf{9.885} & \textbf{0.287} & \textbf{0.752} & \textbf{30.129} & \textbf{23.903} & \textbf{0.771} & \textbf{0.299} & \textbf{9.964} & \textbf{0.090} & \textbf{0.753}\\
\hline 
\rule{0pt}{1em}
\multirow{5}{*}{4} 
& \normalfont IIS & - & - & - & - & - & 33.540 & 0.976 & 0.184 & 15.783 & 0.338 & 0.686 \\
& \normalfont AIS & - & - & - & - & - & 28.045 & 0.956 & 0.369 & 5.131 & 0.126 & 0.947\\
\cmidrule(lr){2-16}
& \normalfont CRoSS* & {0.418} & {13.445} & {0.453} & {0.687} & 24.601 & 13.190 & 0.312 & 0.681 & {12.731} & {0.297} & {0.696}\\
& \normalfont DiffStega* & 0.364 & 16.160 & 0.509 & 0.600 & {27.367} & {19.233} & {0.609} & {0.442} & 17.533 & 0.545 & 0.508\\
& \normalfont MIDAS(Ours) & \textbf{0.479} & \textbf{8.996} & \textbf{0.296} & \textbf{0.787} & \textbf{30.169} & \textbf{22.283} & \textbf{0.697} & \textbf{0.359} & \textbf{9.399} & \textbf{0.118} & \textbf{0.763}\\
\bottomrule
\end{tabular*}
\end{sc}
\end{small}
\end{center}
\vskip -0.2in
\end{table*}
\paragraph{Quantitative Results.} Table \ref{tab:main} provides a quantitative comparison of MIDAS against baseline methods on the Stego260 dataset. Additional experimental results conducted on the UniStega dataset are provided in Appendix \ref{appendix:unistega}. We define stego image diversity as the dissimilarity {(e.g., low PSNR and SSIM, or high LPIPS) between each secret image and its corresponding region in the stego image, where the layout is partitioned by the concatenation boundaries}. The CLIP Score also contributes to stego image diversity assessment, as it reflects the model's capability for prompt-aligned image generation. High diversity is crucial for security, as it prevents third parties from predicting the secret content. We measure stego image quality using MANIQA, which provides a state-of-the-art assessment of visual naturalness that aligns closely with human perception.

CRoSS* and DiffStega* exhibit poor stego image diversity; their generated stego images remain highly similar to the secret images and fail to align with $\mathcal P_{pub}$ (i.e., low CLIP Score). This low diversity not only severely compromises security but also makes concatenation-based multi-image hiding highly challenging, as it often leads to visually fragmented stego image generation in multi-user scenarios. Consequently, the stego image quality of these baseline methods deteriorates significantly as $N$ increases; specifically, DiffStega* suffers from a sharp decline in visual quality (indicated by MANIQA values), while CRoSS* fails to generate images aligned with the prompts (indicated by extremely low CLIP Scores). In contrast, MIDAS exhibits both high stego image quality (high MANIQA) and superior diversity (extremely low similarity with the secret images) regardless of $N$. This demonstrates the suitability and scalability of MIDAS for multi-image hiding scenarios. Furthermore, evaluations under a larger number of secret images (e.g., $N=8$) are provided in Appendix \ref{appendix:large_capacity} to further validate the robustness of our framework in high-capacity settings.

Furthermore, the data confirms the robustness of MIDAS's access control functionality. Unlike CRoSS* and DiffStega*, where unauthorized reconstruction often remains possible, MIDAS exhibits a significant difference in similarity metrics between the reconstructed image and the secret image when the correct $\mathcal K_{priv}$ is used (high similarity) versus when the wrong $\mathcal K_{priv}$ is applied (low similarity). This provides quantitative validation that the access control mechanism is successfully implemented, consistent with the qualitative results (Fig. \ref{qual}). 


Although modification-based methods (IIS and AIS) show superior performance compared to training-free CIS methods, they inherently possess security vulnerabilities. More precisely, the resulting modification leaves inevitable traces in the stego image, making it easily detectable if the original cover image is exposed. Furthermore, these methods are training-based, {requiring additional optimization overhead.}

\setlength{\tabcolsep}{1pt}
\begin{table}[t]
\caption{PSNR(dB) of reconstructed images under practical degradations. For Gaussian {n}oise and JPEG {c}ompression, standard deviation $\sigma = 5$ and quality factor $Q = 70$ are used, respectively. For Gaussian blur, $\sigma = 2$ and kernel size 7 are used. Considering these degradations as additional noise, 5 additional denoising steps are applied to the diffusion-based frameworks (CRoSS*, DiffStega*, and MIDAS) in the Reconstruction Stage.}
\label{tab:degrad}
\begin{center}
\begin{small}
\fontsize{8pt}{10pt}\selectfont
\begin{sc}
\begin{tabular}{ccccc}
\toprule
\normalfont Method & \normalfont Clean & \normalfont Gaussian noise & \normalfont JPEG  & \normalfont Gaussian blur\\
\midrule
\normalfont IIS & 41.360 & 12.439 & 10.047 & 10.677\\
\normalfont AIS & 30.774 & 14.654 & 9.442 & 9.996\\
\normalfont CRoSS* & 17.606 & 16.300 & 16.932 & 15.629\\
\normalfont DiffStega* & 21.908  & 20.076 & 20.422 & 19.375\\
\normalfont MIDAS(Ours) & 23.903 & 20.046 & 19.922 & 19.689\\
\bottomrule
\end{tabular}
\end{sc}
\end{small}
\end{center}
\vskip -0.2in
\end{table}

\paragraph{Anti-Steganalysis.} Steganalysis methods aim to distinguish between stego images and cover images. We evaluate the security of MIDAS compared to baseline methods using the state-of-the-art steganalysis methods: XuNet \cite{XuNet} and SiaStegNet \cite{SiaStegNet}. These models are trained on cover-stego pairs to identify steganographic traces, with their detection accuracy generally improving as the training dataset size increases. Since coverless steganography methods lack of cover images, we utilize images generated from the same model and prompts as the corresponding cover images. Specifically, these cover images are generated using a randomly sampled noise vector under the same $\mathcal{P}_{pub}$ as the stego images. Fig. \ref{anti_steg} presents the steganalysis accuracy results across varying training data sizes. Modification-based methods (IIS and AIS) consistently achieve high steganalysis accuracy ($>90\%$), because the embedding modifications leave inherent traces within the stego images, allowing the steganalysis models to distinguish them from the original covers. Although CRoSS* and DiffStega* are coverless methods, their poor stego image quality introduces visible artifacts, causing its steganalysis accuracy to rapidly converge to a high value ($>85\%$). In contrast, the steganalysis accuracy of MIDAS is approximately 20\% lower than that of other baselines, significantly limiting the detector's confidence and demonstrating enhanced security. This resilience stems from its coverless approach combined with its superior stego image generation quality, which ensures the stego images remain highly statistically similar to genuinely generated images.

\begin{figure}[t]
\begin{center}
\center{\includegraphics[width=\columnwidth]{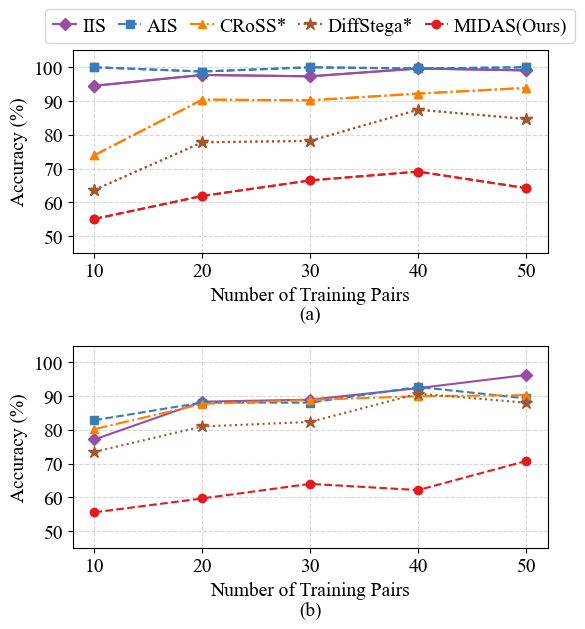}}
\vskip -0.1in
\caption{{Steganalysis accuracy on} (a) XuNet and (b) SiaStegNet.}
\label{anti_steg}
\end{center}
\vskip -0.2in
\end{figure}

\paragraph{Robustness.} We evaluate the robustness against noise introduced during the transmission of the stego image. Table~\ref{tab:degrad} reports the reconstruction quality when the stego image is corrupted by Gaussian noise or Gaussian blur, or compressed using JPEG. Even under strong degradation, MIDAS maintains an acceptable level of secret information transmission. In contrast, existing modification-based access control methods (IIS, AIS) exhibit a significant performance drop under these degradations. This superior performance highlights the robustness of MIDAS to channel noise and attack, which is a critical advantage of our diffusion-based approach.

\paragraph{More Experiments.} {Additional evaluations on a different dataset and in a high-capacity setting ($N=8$), comparisons with HIS \cite{HIS}, and analyses of practical overheads are provided in Appendix~\ref{appendix:more_ex}. Ablation studies on the key components of MIDAS, different model checkpoints, and hyperparameter choices are presented in Appendix~\ref{sec:ablation}. These results demonstrate the robustness of MIDAS across datasets, its scalability to extreme-capacity settings, and its ability to maintain computational time and memory usage comparable to baseline training-free CIS methods. They also validate the effectiveness of the proposed components and strategies across different model checkpoints, while characterizing the impact of individual hyperparameters.}


\section{Conclusion}
This paper presents MIDAS, a training-free framework for access-controlled CIS utilizing pre-trained diffusion models. By leveraging the Random Basis and Latent Vector Fusion, MIDAS achieves robust access control and high-fidelity reconstruction without the need for resource-intensive training. Extensive experiments confirm that MIDAS outperforms training-free CIS baselines in both security and access control. {Owing to its superior access control, MIDAS facilitates flexible and efficient communication in various multi-user scenarios through private key distribution {based on information access privileges}. However, as the diffusion process can involve inference latency, optimizing sampling strategies for improved efficiency could be a meaningful extension.}

\section*{Acknowledgements}
This work was supported in part by the Institute of Information \& Communications Technology Planning \& Evaluation (IITP) through the Next Generation Semantic Communication Network Research Center Grant (RS-2024-00398948), and in part by the IITP through 6G · Cloud Research and Education Open Hub Grant (IITP-2026-RS-2024-00428780), funded by the Korea government (MSIT).

\section*{Impact Statement}
In this work, we propose MIDAS, a training-free and access-controlled scheme for coverless image steganography using pre-trained diffusion models. This capability makes our approach highly effective for implementing robust access control in practical communication scenarios that preclude resource-intensive training. By enabling selective and secure data sharing, {even scalable given its multi-image hiding capability,} our algorithm holds the potential to enhance the privacy and practical utility of coverless image steganography systems in real-world applications. {However, as this technology could be exploited for privacy attacks, such as unauthorized data exfiltration, its ethical implications warrant careful attention within communities where stego-images are shared. }



\bibliography{BIB/ASDF}
\bibliographystyle{icml2026}

\newpage
\appendix
\onecolumn

\section{Architecture of MIDAS}
\label{appendix:architecture}
\begin{figure}[ht!]
\vskip 0.2in
\begin{center}
\center{\includegraphics[width=6.5in]{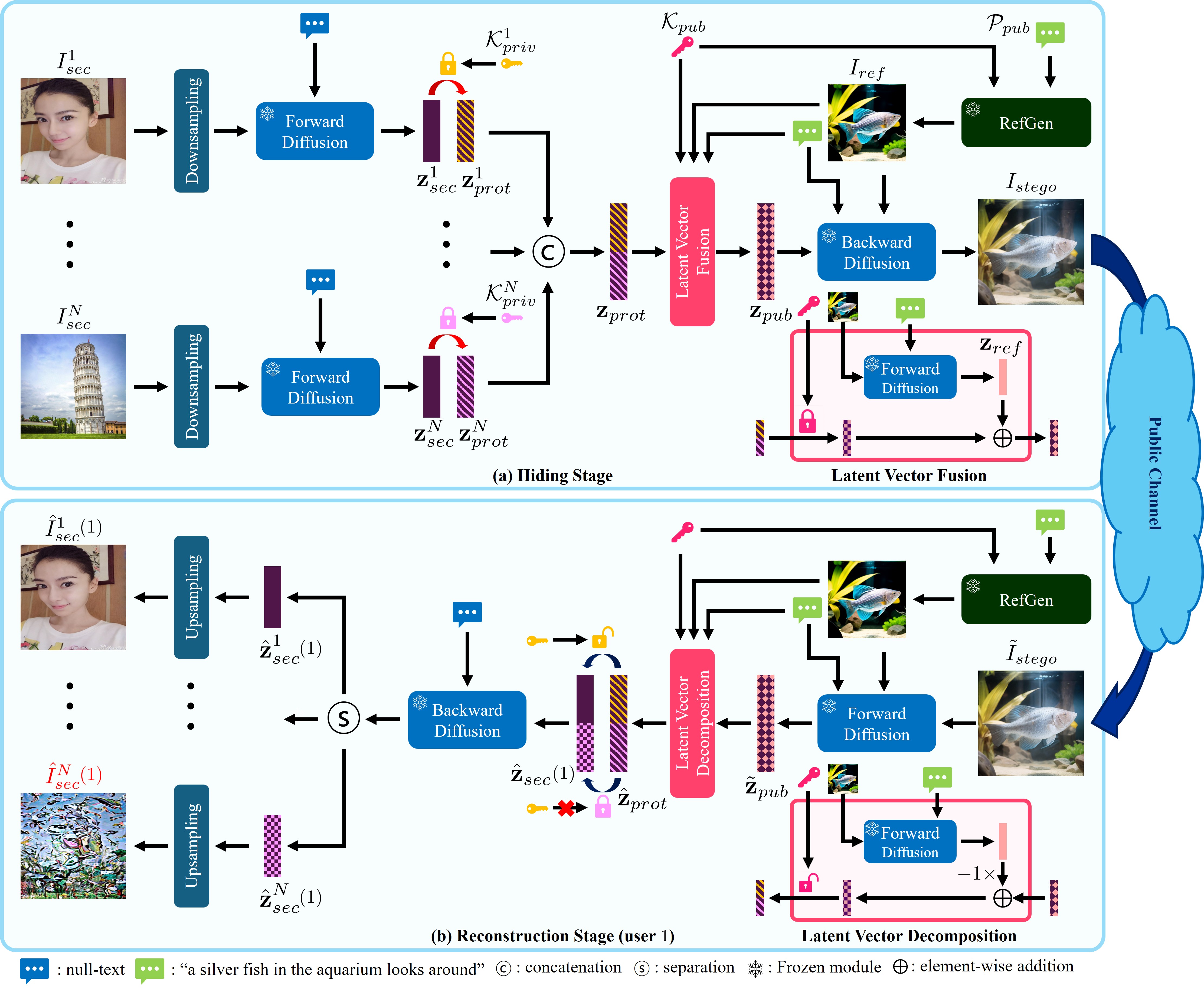}}
\caption{The overall architecture of MIDAS. (a) The Hiding Stage, where $N$ secret images (${I}_{sec}^1, ..., {I}_{sec}^N$) are individually encrypted using private keys (${\mathcal{K}}_{priv}^i$) and then jointly fused using the public key ($\mathcal{K}_{pub}$) to form the single stego image (${I}_{stego}$). (b) The Reconstruction Stage (for user $1$), where user $1$ recovers their assigned image (${\hat{I}}_{sec}^1$) using ${\mathcal{K}}_{priv}^1$, from the received stego image (${\tilde{I}}_{stego}$) (which may contain noise due to transmission errors), while the access control mechanism ensures that other images remain encrypted.}
\label{architecture}
\end{center}
\vskip -0.2in
\end{figure}
\newpage

\section{Experimental Details}
\label{appendix:exp_detail}
\subsection{Construction of Random Basis Matrix}
\label{appendix:random_basis}
To construct $Q_d$, an intermediate orthonormal matrix $Q'_{\lfloor \gamma d \rfloor} \in \mathbb{R}^{\lfloor \gamma d \rfloor \times \lfloor \gamma d \rfloor}$ is first generated via QR decomposition using a seed $\mathcal{K}$. This matrix is then extended into a larger block-diagonal form, $\text{diag}(Q'_{\lfloor \gamma d \rfloor}, I)$, where the remaining entries are filled by an identity matrix. Finally, $Q_d$ is obtained by shuffling the rows of this block-diagonal matrix, with the permutation order determined by the same seed $\mathcal{K}$.

\subsection{Implementation Details}
\label{appendix:impl_detail}
All experiments were conducted in a Python 3.12.9 environment equipped with an AMD Ryzen Threadripper PRO 3955WX (16-core) processor and an NVIDIA RTX 3090 GPU. For the diffusion model, the total time step is set to $T=50$ and the EDICT \cite{EDICT} mixing coefficient to 0.93. To ensure high-quality reconstruction, only a portion $\xi$ of the total steps is executed; the diffusion process thus runs over steps $[0, \xi T]$. We define the private diffusion stage as the phase where secret images are inverted, and the public diffusion stage as the phase where the fused latent is reconstructed into the stego image. We set $\xi=0.4$ for the private diffusion stage and the Latent Vector Fusion stage (to get $\mathbf{z}_{ref}$), and $\xi=0.7$ for the public diffusion stage. For ${N>1}$, $\gamma_{priv} = 0.4$ and $\gamma_{fuse}=0.5$ are used. For $N=1$, since latent vector mixing is unnecessary, we set $\gamma_{fuse} = 0$ and $\gamma_{priv}=0.5$. In the Latent Vector Fusion, $\alpha={0.95}$ is used. Furthermore, for $N>1$, a 5-step DDIM \cite{DDIM} forward process is applied immediately following Private Key Encryption, followed by a 5-step DDIM backward process prior to the User Access Control stage. We empirically found that this intermediate DDIM sequence significantly enhances reconstruction quality. The source code is available at https://github.com/Minyeol/MIDAS.

To evaluate CRoSS* \cite{CRoSS} and DiffStega* \cite{DiffStega} in a multi-user environment, we extend their official implementations by incorporating the latent concatenation mechanism utilized in MIDAS. For all the other hyperparameters (e.g., diffusion steps and key strength), we follow the default settings provided in their respective original papers.

CRoSS*: For a set of secret images $\{I_{sec}^1, \dots, I_{sec}^N\}$, each image is independently processed via a DDIM forward pass using its specific private prompt. The resulting latent vectors are then concatenated and denoised using a public prompt to generate $I_{stego}$. In our implementation, we adopted separate denoising for each secret image, as this approach was found to yield superior results for this baseline.

DiffStega*: Each secret image $\{I_{sec}^1, \dots, I_{sec}^N\}$ undergoes an individual DDIM forward pass with a null-text prompt, followed by encryption using distinct private keys $K_{priv}$ via the Noise Flip mechanism. These encrypted latents are concatenated, and an additional Noise Flip operation is applied to the combined vector. The final latent is denoised with a public prompt and a reference image to produce $I_{stego}$. The reconstruction is performed using a joint denoising strategy.

While the original DiffStega was optimized for single-image scenarios, we introduce DiffStega**, which adopts the same diffusion steps and key strengths as MIDAS to ensure a fair comparison (see {Appendix} \ref{sec:eff_LVF} for details).

DiffStega**: We set the private key strength to $\gamma_{priv} = 0.2$ and the public key strength to $\gamma_{fuse} = 0.25$, while also applying the same 5-step intermediate DDIM sequence used in MIDAS. Under these settings, the signal-to-noise ratio between the original and encrypted latents aligns with that of Random Basis encryption, ensuring an equitable comparison.

We utilize the source caption as the private prompt for CRoSS*, whereas DiffStega*, DiffStega** and MIDAS exclusively employ null-text prompts. For reconstruction under a wrong $K_{priv}$ where $N > 1$, we evaluate each model by attempting decryption with the $K_{priv}$ of every other authorized user. In the $N = 1$ case, where no other users exist, the protocols differ: CRoSS* is tested using a null-text prompt as the incorrect key, while the other training-free CIS models utilize a mismatched random seed.

For modification-based baselines, IIS \cite{IIS} and AIS \cite{AIS} were retrained using their official open-source implementations on the DIV2K dataset \cite{DIV2K}.

For steganalysis, we randomly sample pairs of cover and stego images, splitting them into a training set (80\%) and a validation set (20\%). We employ XuNet \cite{XuNet} and SiaStegNet \cite{SiaStegNet} using available open-source implementations.\footnote{https://github.com/albblgb/Deep-Steganalysis} Both models are trained for 50 epochs, and the version achieving the highest validation accuracy is selected for evaluation.

\newpage
\section{More Experiments}
\label{appendix:more_ex}
\subsection{Qualitative Results of {H}iding {F}our {I}mages}
\label{appendix:4in1}
\begin{figure*}[ht!]
\vskip 0.2in
\begin{center}
\center{\includegraphics[width=6.5in]{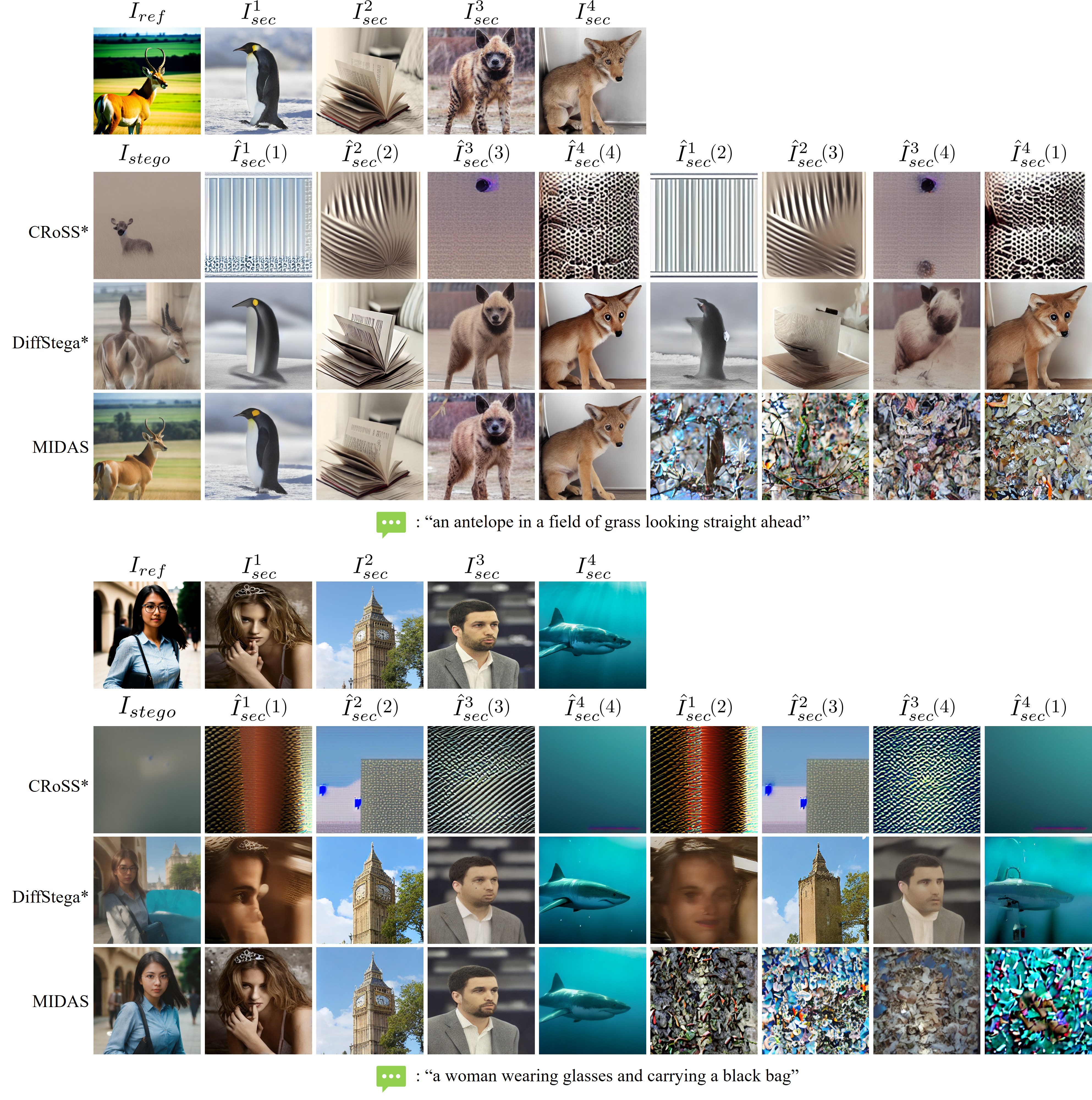}}
\caption{Visual comparisons of MIDAS{, CRoSS*,} and DiffStega{*} on the Stego260 dataset under different prompts for four-image hiding.}
\label{4in1}
\end{center}
\vskip -0.2in
\end{figure*}

\newpage
\subsection{Experimental Results on UniStega}
\label{appendix:unistega}
\begin{table*}[ht!]
\caption{Quantitative results for stego and reconstructed images on the UniStega dataset. }
\label{tab:unistega}
\begin{center}
\begin{small}
\begin{sc}
\setlength{\tabcolsep}{1pt}
\begin{tabular*}{\textwidth}{cc ccc cc ccc ccc ccc}
\toprule
\multirow{2}{*}{\normalfont $N$} & 
\multirow{2}{*}{\normalfont Method} & 
\multicolumn{1}{c}{\normalfont Stego image quality} & 
\multicolumn{4}{c}{\normalfont Stego image diversity} & 
\multicolumn{3}{c}{\normalfont Correct $\mathcal K_{priv}$ reconstruction} & 
\multicolumn{3}{c}{\normalfont Wrong $\mathcal K_{priv}$ reconstruction} &  \\
\cmidrule(lr){3-3} \cmidrule(lr){4-7} \cmidrule(lr){8-10} \cmidrule(lr){11-13} \cmidrule(lr){14-16}
& &\normalfont \normalfont MANIQA$\uparrow$ &PSNR$\downarrow$ & \normalfont SSIM$\downarrow$ & \normalfont LPIPS$\uparrow$ & \normalfont CLIP Score$\uparrow$ &  \,\,\,\normalfont PSNR$\uparrow$ & \,\,\,\,\,\normalfont SSIM$\uparrow$ \,& \normalfont LPIPS$\downarrow$ \!& \normalfont \,\,\,PSNR$\downarrow$ \,\,\,& \normalfont SSIM$\downarrow$ \!\!& \normalfont LPIPS$\uparrow$\\
\midrule
\multirow{5}{*}{1} 
& \normalfont IIS & - & - & - & - & - & 42.724 & 0.995 & 0.032 & 10.497 & 0.125 & 0.786 \\
& \normalfont AIS & - & - & - & - & - & 31.585 & 0.997 & 0.155 & 22.484 & 0.634 & 0.342\\
\cmidrule(lr){2-16} 
&{\normalfont CRoSS*} & \textbf{0.445} & 19.452 & 0.646 & 0.397 & 27.574 & 21.116 & {0.713} & {0.322} & {17.415} & {0.564} & {0.480} \\
& \normalfont DiffStega* & {0.439} & {18.450} & {0.581} & {0.471} & {28.676} & \textbf{23.183} & \textbf{0.767} & \textbf{0.269} & 18.614 & 0.589 & 0.429 \\
& \normalfont MIDAS(Ours) & 0.409 & \textbf{13.329} & \textbf{0.380} & \textbf{0.658} & \textbf{29.127} & {21.196} & 0.677 & 0.336 & \textbf{13.236} & \textbf{0.218} & \textbf{0.680} \\
\hline 
\rule{0pt}{1em}
\multirow{5}{*}{2} 
& \normalfont IIS & - & - & - & - & - & 36.053 & 0.984 & 0.104 & 11.707 & 0.244 & 0.776 \\
& \normalfont AIS & - & - & - & - & - & 28.651 & 0.973 & 0.302 & 4.573 & 0.098 & 0.856 \\
\cmidrule(lr){2-16}  
& \normalfont CRoSS* & {0.442} & {15.336} & 0.468 & {0.597} & 25.912 & 17.196 & 0.519 & 0.524 & {15.469} & {0.446} & {0.584} \\
& \normalfont DiffStega* &  0.413 & 16.380 & {0.459} & {0.592} & {27.550} & {20.638} & {0.654} & {0.393} & 17.471 & 0.519 & 0.498\\
& \normalfont MIDAS(Ours) & \textbf{0.458} & \textbf{10.431} & \textbf{0.254} & \textbf{0.760} & \textbf{29.974} & \textbf{22.563} & \textbf{0.701} & \textbf{0.346} & \textbf{10.104} & \textbf{0.092} & \textbf{0.751} \\
\hline 
\rule{0pt}{1em}
\multirow{5}{*}{4} 
& \normalfont IIS & - & - & - & - & - & 30.819 & 0.959 & 0.215 & 14.910 & 0.348 & 0.666\\
& \normalfont AIS & - & - & - & - & - & 25.916 & 0.945 & 0.365 & 5.259 & 0.102 & 0.979 \\
\cmidrule(lr){2-16} 
& \normalfont CRoSS* & {0.385} & {13.177} & {0.377} & {0.691} & 23.313 & 12.567 & 0.247 & 0.709 & {12.321} & {0.237} & {0.718} \\
& \normalfont DiffStega* & 0.360 & 15.657 & 0.438 & 0.632 & {27.238} & {18.408} & {0.538} & {0.491} & 16.975 & 0.478 & 0.550 \\
& \normalfont MIDAS(Ours) & \textbf{0.483} & \textbf{10.408} & \textbf{0.298} & \textbf{0.780} & \textbf{29.813} & \textbf{21.557} & \textbf{0.635} & \textbf{0.399} & \textbf{10.987} & \textbf{0.136} & \textbf{0.739} \\
\bottomrule
\end{tabular*}
\end{sc}
\end{small}
\end{center}
\vskip -0.1in
\end{table*}

\subsection{Very Large Capacity}
\label{appendix:large_capacity}
\begin{table*}[ht!]
\caption{Quantitative results of MIDAS on stego and reconstructed images for eight-image hiding. }
\label{tab:large_capacity}
\begin{center}
\begin{small}
\begin{sc}
\setlength{\tabcolsep}{1pt}
\begin{tabular*}{\textwidth}{cc ccc cc ccc ccc ccc}
\toprule
\multirow{2}{*}{\normalfont $N$} & 
\multirow{2}{*}{\normalfont \quad Dataset\quad} & 
\multicolumn{1}{c}{\normalfont Stego image quality} & 
\multicolumn{4}{c}{\normalfont Stego image diversity} & 
\multicolumn{3}{c}{\normalfont Correct $\mathcal K_{priv}$ reconstruction} & 
\multicolumn{3}{c}{\normalfont Wrong $\mathcal K_{priv}$ reconstruction} &  \\
\cmidrule(lr){3-3} \cmidrule(lr){4-7} \cmidrule(lr){8-10} \cmidrule(lr){11-13} \cmidrule(lr){14-16}
& &\normalfont MANIQA$\uparrow$ &PSNR$\downarrow$ & \normalfont SSIM$\downarrow$ & \normalfont LPIPS$\uparrow$ & \normalfont CLIP Score$\uparrow$ &  \,\,\,\normalfont PSNR$\uparrow$ & \,\,\,\,\,\normalfont SSIM$\uparrow$ \,& \normalfont LPIPS$\downarrow$ \!& \normalfont \,\,\,PSNR$\downarrow$ \,\,\,& \normalfont SSIM$\downarrow$ \!\!& \normalfont LPIPS$\uparrow$\\
\midrule
\multirow{2}{*}{8} 
& \normalfont \quad Stego260 \quad & 0.456 & 9.698 & 0.357 & 0.785 & 30.257 & 20.502 & 0.624 & 0.440 & 11.219 & 0.187 & 0.731\\
& \normalfont \quad UniStega \quad & 0.421 & 10.320 & 0.316 & 0.799 & 29.084 & 19.439 & 0.543 & 0.502 & 11.230 & 0.167 & 0.738 \\
\bottomrule
\end{tabular*}
\end{sc}
\end{small}
\end{center}
\vskip -0.1in
\end{table*}

\begin{figure*}[ht!]
\vskip 0.2in
\begin{center}
\center{\includegraphics[width=6.5in]{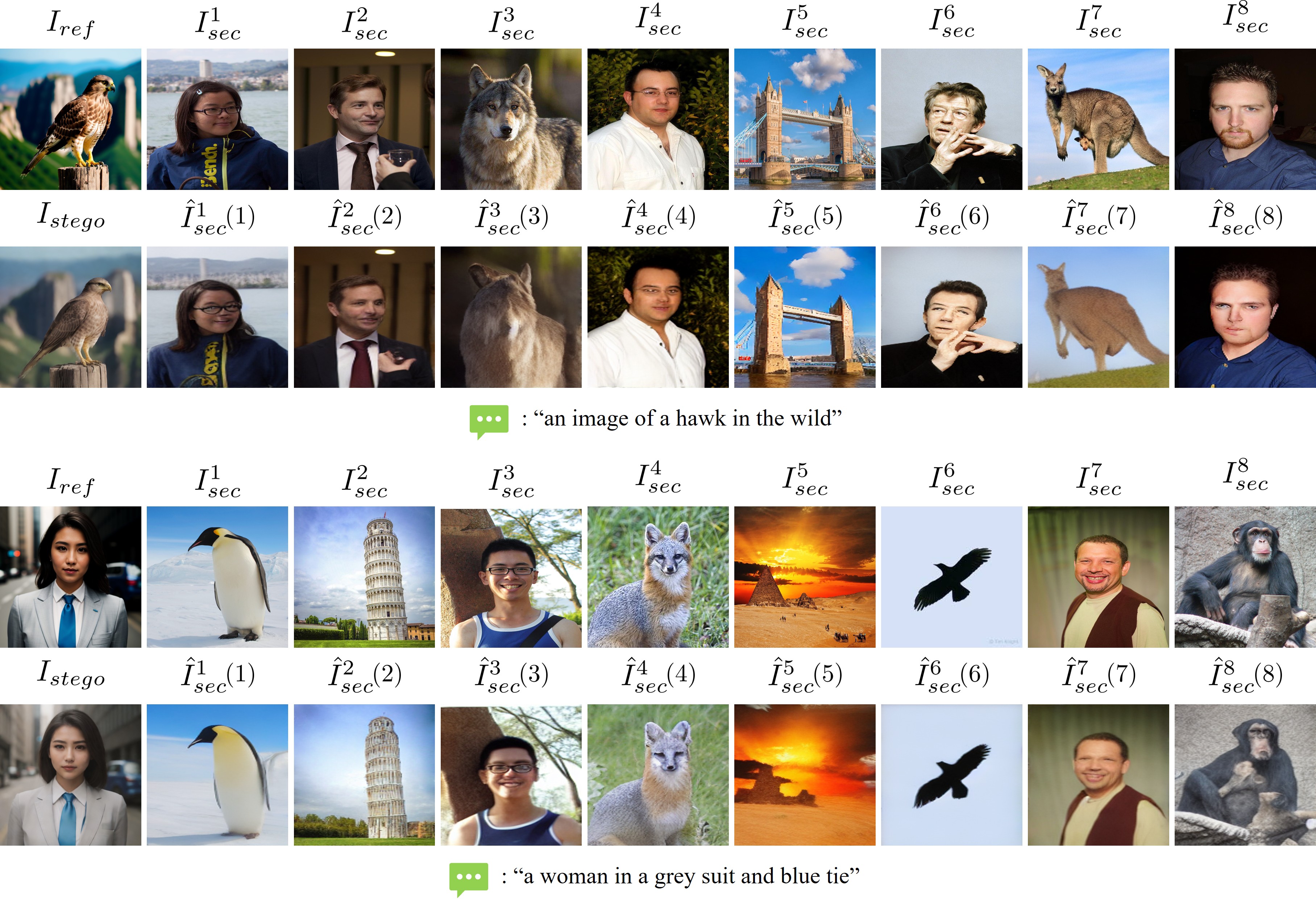}}
\caption{Qualitative results of MIDAS on the Stego260 dataset under different prompts for eight-image hiding.}
\label{8in1}
\end{center}
\vskip -0.2in
\end{figure*}
\clearpage
\newpage

\subsection{Computational Complexity and Memory Usage}
\label{appendix:complexity}
\begin{table}[ht]
\caption{Runtime and peak memory usage of MIDAS and baseline methods when processing 50 secret images.}
\label{tab:complexity}
\begin{center}
\begin{small}
\begin{sc}
\begin{tabular}{ccc}
\toprule
\normalfont Method & \normalfont Time (s)$\downarrow$ & \normalfont Peak GPU$\downarrow$\\
\midrule
\normalfont IIS & 1.540 & 0.811\\
\normalfont AIS & 0.257 & 0.337\\
\normalfont CRoSS* & 38.982 & 8.263\\
\normalfont DiffStega* & 28.436 & 6.944\\
\normalfont MIDAS(Ours) & 33.867 & 7.371\\
\bottomrule
\end{tabular}
\end{sc}
\end{small}
\end{center}
\vskip -0.1in
\end{table}
MIDAS has comparable resource requirements to other diffusion-based methods (e.g., CRoSS* and DiffStega*), with $\sim30$~s inference time on an AMD Ryzen Threadripper PRO 3955WX and an NVIDIA RTX 3090. The peak GPU memory of $\sim 7$~GB makes it feasible on mid-range GPUs (e.g., NVIDIA RTX 3050 (8 GB)). Modification-based methods (e.g., IIS, AIS) are significantly faster and more memory-efficient, but generally more vulnerable to steganalysis (Fig. \ref{anti_steg}).

\subsection{Comparison with HIS}
HIS \cite{HIS} extends multi-image hiding to the CIS setting by reusing stego images generated by CRoSS \cite{CRoSS} as cover images for conventional modification-based multi-image hiding methods. HIS consists of two main components: an Enhance-Flow that improves the reconstruction quality of DDIM, and an Embed-Flow that hides additional content into the CRoSS-generated stego images using modification-based techniques. Due to its complex pipeline, naive access control extensions, as used in CRoSS and DiffStega, are not directly applicable; thus, we provide a separate comparison.

Since HIS is a training-based method unlike the training-free MIDAS, we introduce a simplified variant, HIS*, to enable a fair comparison in terms of computational cost. Specifically, HIS* replaces the Enhance-Flow with an exact inversion method (e.g., EDICT \cite{EDICT}) and employs an Embed-Flow trained on real images (e.g., DIV2K \cite{DIV2K}) instead of diffusion-generated images. The experimental results are shown in Table \ref{tab:HIS*}, where MIDAS consistently achieves substantially higher stego image diversity and demonstrates superior reconstruction and stego image quality as $N$ increases.

\label{appendix:HIS*}
\begin{table*}[ht!]
\caption{Quantitative results for stego and reconstructed images of HIS* and MIDAS on the Stego260 dataset.}
\label{tab:HIS*}
\begin{center}
\begin{small}
\begin{sc}
\setlength{\tabcolsep}{6pt}
\begin{tabular}{cc ccc cc ccc ccc}
\toprule
\multirow{2}{*}{\normalfont $N$} & 
\multirow{2}{*}{\normalfont Method} & 
\multicolumn{1}{c}{\normalfont Stego image quality} & 
\multicolumn{4}{c}{\normalfont Stego image diversity} & 
\multicolumn{3}{c}{\normalfont Correct $\mathcal K_{priv}$ reconstruction}\\
\cmidrule(lr){3-3} \cmidrule(lr){4-7} \cmidrule(lr){8-10} \cmidrule(lr){11-13}
& &\normalfont \normalfont MANIQA$\uparrow$ &PSNR$\downarrow$ & \normalfont SSIM$\downarrow$ & \normalfont LPIPS$\uparrow$ & \normalfont CLIP Score$\uparrow$ & \normalfont PSNR$\uparrow$ & \normalfont SSIM$\uparrow$ & \normalfont LPIPS$\downarrow$\\
\midrule
\multirow{2}{*}{1} 
& \normalfont HIS* & 0.429 & 23.515 & 0.819 & 0.224 & 27.195 & 29.686 & 0.847 & 0.195\\
& \normalfont MIDAS(Ours) & 0.429 & 13.407 & 0.419 & 0.610 & 29.686 & 25.161 & 0.831 & 0.234 \\
\hline 
\rule{0pt}{1em}
\multirow{2}{*}{2} 
& \normalfont HIS* & 0.394 & 22.712 & 0.816 & 0.490	& 27.304 & 30.149 & 0.911 & 0.176\\
& \normalfont MIDAS(Ours) & 0.434 & 9.885 & 0.287 & 0.752 & 30.129 & 23.903 & 0.771 & 0.299 \\
\hline 
\rule{0pt}{1em}
\multirow{2}{*}{4} 
& \normalfont HIS* & 0.366 & 22.920 & 0.814 & 0.485 & 27.245 & 19.347 & 0.792 & 0.442\\
& \normalfont MIDAS(Ours) & 0.479 & 8.996 & 0.296 & 0.787 & 30.169 & 22.283 & 0.697 & 0.359\\
\bottomrule
\end{tabular}
\end{sc}
\end{small}
\end{center}
\vskip -0.1in
\end{table*}

\newpage
\section{Ablation Study}
\label{sec:ablation}

\subsection{Effectiveness of Encryption Strategy.}
\label{appendix:key_mechanism}
As shown in prior work \cite{zigzag}, residual structural information in the latent vector can degrade the diffusion model's generation quality when the structural information is poorly aligned with the generative prompt. Noise Flip, which only inverts the signs of the latent values, inherently preserves some spatial information from the original image. In contrast, Random Basis alters the latent vector by modifying the actual values, thereby more effectively scrambling and eliminating such spatial information.

To quantitatively verify this effect, we compare the amount of residual structural information remaining after applying Random Basis versus Noise Flip.  We use an extremely high key strength to maximize the observable effect to the mechanisms. The evaluation protocol is as follows: 
\begin{enumerate}
    \item  Apply DDIM forward diffusion to the original image. 
    \item  Apply the key mechanism to $K$ channels out of the four diffusion latent vector channels. Increasing $K$ lowers the structural similarity between the pre- and post-encryption latent vectors.
    \item Generate the encrypted image using DDIM backward diffusion on the encrypted vector.
\end{enumerate}
We measure the structural similarity between the original image and the encrypted image in terms of the structural component ($S$) of SSIM defined as follows: 
\begin{equation}
    S(\textbf{x},\textbf{y}) = \frac{2\sigma_{\textbf{xy}} + C}{\sigma_\textbf{x}^2 + \sigma_\textbf{y}^2 + C}
\end{equation}
where $\textbf{x}$ and $\textbf{y}$ are input image patches, $\sigma_\textbf{x}^2$ and $\sigma_\textbf{y}^2$ are the local variances, $\sigma_\textbf{xy}^2$ is the local covariance, and $C$ is a stabilization constant. Note that a lower value of $S$ indicates a lower structural similarity. Table \ref{tab:key_mechanism} shows that Random Basis consistently achieves lower structural similarity than Noise Flip, regardless of the number of encrypted channels.

\setlength{\tabcolsep}{3.5pt}
\begin{table}[ht]
\caption{Quantification of residual structural information in the stego image, measured by the structural component of SSIM between $I_{sec}$ and $I_{stego}$.}
\label{tab:key_mechanism}
\begin{center}
\begin{small}
\begin{sc}
\begin{tabular}{cccccc}
\toprule
\multirow{2}{*}{\normalfont Method} & 
\multicolumn{5}{c}{\normalfont Number of encrypted channels}\\
\cmidrule(lr){2-6}
& 0 & 1 & 2 & 3 & 4\\
\midrule
\normalfont Noise Flip    & 0.809 & 0.582 & 0.355 & 0.201 & 0.089 \\
\normalfont Random Basis  & 0.809 & 0.560 & 0.286 & 0.128 & 0.048 \\
\bottomrule
\end{tabular}
\end{sc}
\end{small}
\end{center}
\vskip -0.1in
\end{table}

\subsection{Effectiveness of Diffusion Strategy.}
\label{appendix:diff_rule}
In the Reconstruction Stage of MIDAS, each user $i$ employs joint denoising of the full latent vector $\hat{\mathbf{z}}_{sec}(i)$ instead of denoising the individual segments $\hat{\mathbf{z}}_{sec}^1(i), ..., \hat{\mathbf{z}}_{sec}^N(i)$ separately. As shown in Table \ref{tab:diff_rule}, this joint denoising approach significantly outperforms separate denoising during reconstruction, yielding a PSNR$_\text{Correct}$ gain of approximately 1 dB. The superior performance can be attributed to the fact that the Stable Diffusion model is optimized for the $512 \times 512$ latent-space resolution. Processing the full latent $\mathbf{z}_{sec}(i)$ thus better exploits the model's inherent contextual understanding and global coherence, leading to significantly improved reconstruction quality.

\setlength{\tabcolsep}{3.5pt}
\begin{table}[ht]
\caption{Performance analysis across different diffusion strategies. {PSNR$_{\text{Correct}}$ denote the PSNR values between the original secret images and their reconstructions using the correct $\mathcal{K}_{priv}$.}}
\label{tab:diff_rule}
\begin{center}
\begin{small}
\begin{sc}
\begin{tabular}{cccc}
\toprule
\normalfont Method & \normalfont MANIQA$\uparrow$ & \normalfont CLIP Score$\uparrow$ & \normalfont PSNR$_\text{Correct}$$\uparrow$\\
\midrule
\normalfont separate denoising & 0.434 & 30.129 & 22.928\\
\normalfont joint noising & 0.417 & 30.074 & 23.901 \\
\normalfont MIDAS(Ours) & 0.434 & 30.129 & 23.903 \\
\bottomrule
\end{tabular}
\end{sc}
\end{small}
\end{center}
\vskip -0.1in
\end{table}

In contrast, applying joint noising during the Hiding Stage does not yield any improvement in either stego image quality or reconstructed image quality compared to separate processing.
 Moreover, adopting this complex joint process introduces a potential security risk by allowing information to mingle across encrypted latent segments associated with different users. Therefore, joint noising is intentionally omitted during the Hiding Stage.

\subsection{Effectiveness of Latent Vector Fusion.}
\label{sec:eff_LVF}
Since the original parameters of CRoSS* and DiffStega* were optimized for the $N=1$ case, the performance gap between these baselines and MIDAS may partly stem from parameter differences. To eliminate this factor, we construct DiffStega** by adopting the fair key strengths and diffusion steps used in MIDAS. We further evaluate MIDAS$_{\text{NF}}$, a variant of MIDAS where the Random Basis mechanism is replaced with a Noise Flip, to verify its specific effectiveness. Of note, $\text{MIDAS}_{\text{NF}}$ serves as an intermediate variant to bridge the gap between DiffStega** and MIDAS by incorporating a weighted interpolation with $\mathbf{z}_{ref}$ into the DiffStega** framework. This setup allows us to disentangle the performance gains provided by the fusion architecture from those achieved by the Random Basis mechanism.

Table \ref{tab:LVF} presents the results of our ablation study. While optimizing the parameters of DiffStega* (yielding DiffStega**) markedly improves diversity (higher CLIP Score) and reconstruction quality, it leads to a substantial degradation in perceptual stego image quality, as evidenced by a sharp drop in MANIQA scores. The introduction of our Latent Vector Fusion architecture effectively mitigates this degradation, significantly restoring stego image quality in MIDAS$_{\text{NF}}$. Notably, this architectural improvement not only recovers visual fidelity but also further enhances diversity, while maintaining robust reconstruction performance. Finally, the incorporation of the Random Basis mechanism provides further refinement, achieving the highest visual fidelity and diversity among all variants. These results demonstrate that the Latent Vector Fusion architecture and its Random Basis mechanism are essential for achieving the superior performance of MIDAS.
\setlength{\tabcolsep}{2pt}
\begin{table}[ht]
\caption{Effectiveness of Latent Vector Fusion. PSNR$_{\text{Correct}}$ and PSNR$_{\text{Wrong}}$ denote the PSNR values between the original secret images and their reconstructions using the correct and incorrect $\mathcal{K}_{priv}$, respectively.}
\label{tab:LVF}
\begin{center}
\begin{small}
\begin{sc}
\begin{tabular}{ccccc}
\toprule
\normalfont Method & \normalfont MANIQA$\uparrow$ & \normalfont CLIP Score$\uparrow$ & \normalfont PSNR$_\text{Correct}$$\uparrow$ & \normalfont PSNR$_\text{Wrong}$$\downarrow$\\
\midrule
\normalfont DiffStega* & 0.399 & 26.952 & 21.908 & 18.137 \\
\normalfont DiffStega** & 0.324 & 28.459 & 24.091 & 10.769 \\
\normalfont MIDAS$_\text{NF}$ & 0.380 & 29.631 & 23.616 & 10.566 \\
\normalfont MIDAS(Ours) & 0.434 & 30.129 & 23.903 & 9.964 \\
\bottomrule
\end{tabular}
\end{sc}
\end{small}
\end{center}
\vskip -0.1in
\end{table}

\subsection{Effectiveness of $\alpha$.}
\label{appendix:alpha}

Fig. \ref{abl:alpha} and Table \ref{tab:alpha} show the experimental results of MIDAS with varying $\alpha$, the parameter in Equation \eqref{eq:LVF} that balances $M_{D}\mathbf{z}_{prot}$ and $\mathbf{z}_{ref}$. As $\alpha$ decreases (i.e., the influence of the reference image grows), stego image quality is improved, however, this comes at the cost of reduced reconstruction quality. This trade-off is observed in Fig. 8, and quantitative results show that stego image quality quickly saturates, while reconstruction quality steadily degrades as $\alpha$ decreases.
\begin{figure}[ht]
\begin{center}
\center{\includegraphics[width=3in]{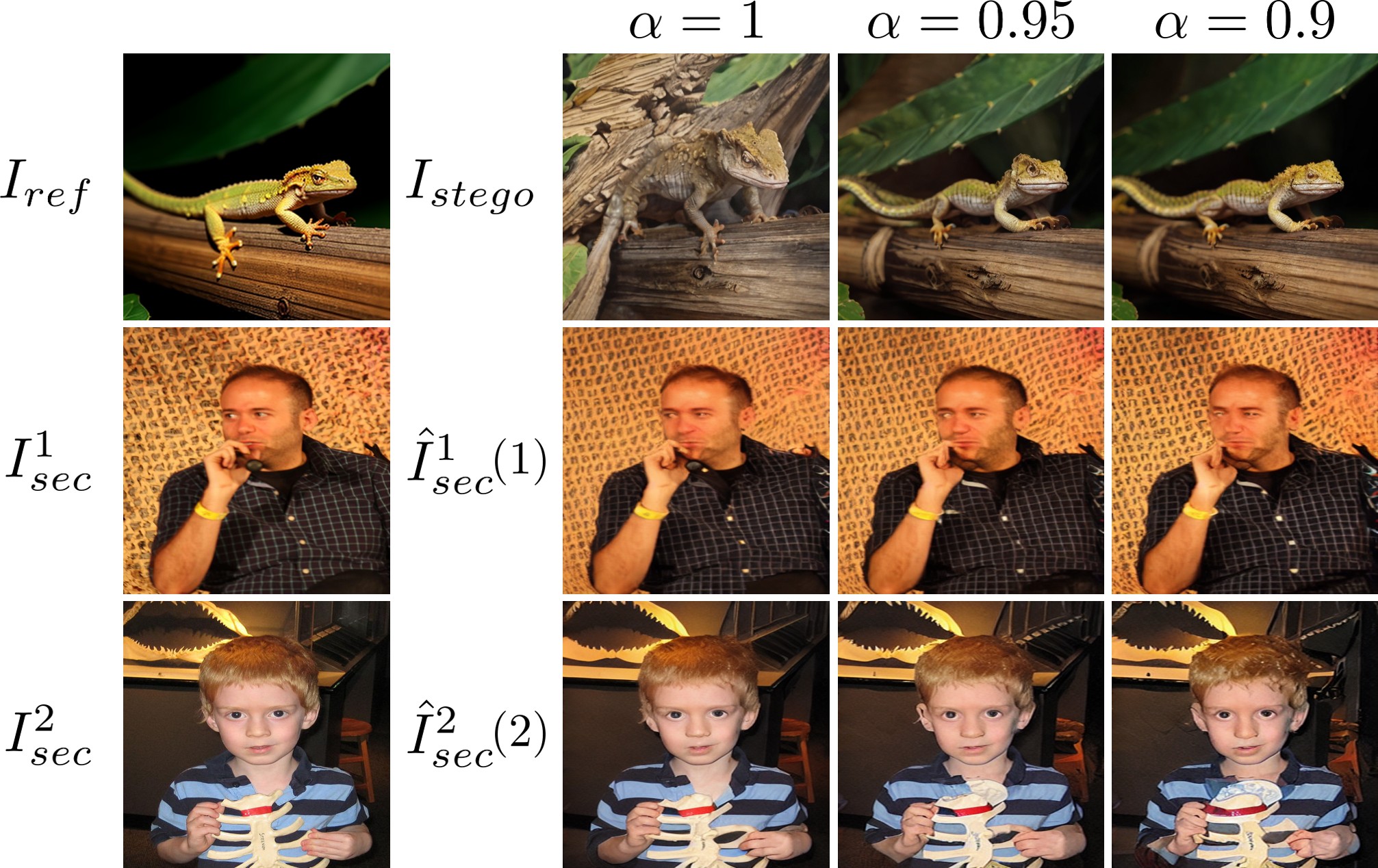}}
\caption{Experimental results of MIDAS with varying $\alpha$.}
\label{abl:alpha}
\end{center}
\end{figure}

\setlength{\tabcolsep}{2pt}
\begin{table}[ht]
\caption{Performance analysis of MIDAS with varying $\alpha$.}
\label{tab:alpha}
\begin{center}
\begin{small}
\begin{sc}
\begin{tabular}{ccccc}
\toprule
\normalfont $\alpha$ & \normalfont MANIQA$\uparrow$ & \normalfont CLIP Score$\uparrow$ & \normalfont PSNR$_\text{Correct}$$\uparrow$ & \normalfont PSNR$_\text{Wrong}$$\downarrow$\\
\midrule
\normalfont 0.9 & 0.436 & 30.212 & 23.286 &	9.755 \\
\normalfont 0.95 & 0.434 & 30.129 & 23.903 & 9.964 \\
\normalfont 1 & 0.394 & 29.044 & 24.481 & 10.271 \\
\bottomrule
\end{tabular}
\end{sc}
\end{small}
\end{center}
\vskip -0.1in
\end{table}

\newpage
\subsection{Effectiveness of $\gamma_{priv}$ and $\gamma_{fuse}$.}
\label{appendix:gamma}

Fig. \ref{abl:gamma} illustrates the stego images generated by MIDAS with varying values of $\gamma_{priv}$ and $\gamma_{fuse}$. Reconstructed images are omitted as their quality shows no significant change across these parameters. As shown in the figure, low values of $\gamma_{priv}$ and $\gamma_{fuse}$ result in poor stego quality, as the structural information of the secret images remains visible.  However, as these parameters increase, the structural interference diminishes, yielding visually natural and high-quality stego images. Tables \ref{tab:gamma_priv} and \ref{tab:gamma_fuse} demonstrate these effects. Notably, $\gamma_{priv}$ and $\gamma_{fuse}$ have little impact on reconstruction quality, while wrong-key reconstruction is inversely proportional to these parameters.

\begin{figure}[ht]
\begin{center}
\center{\includegraphics[width=4in]{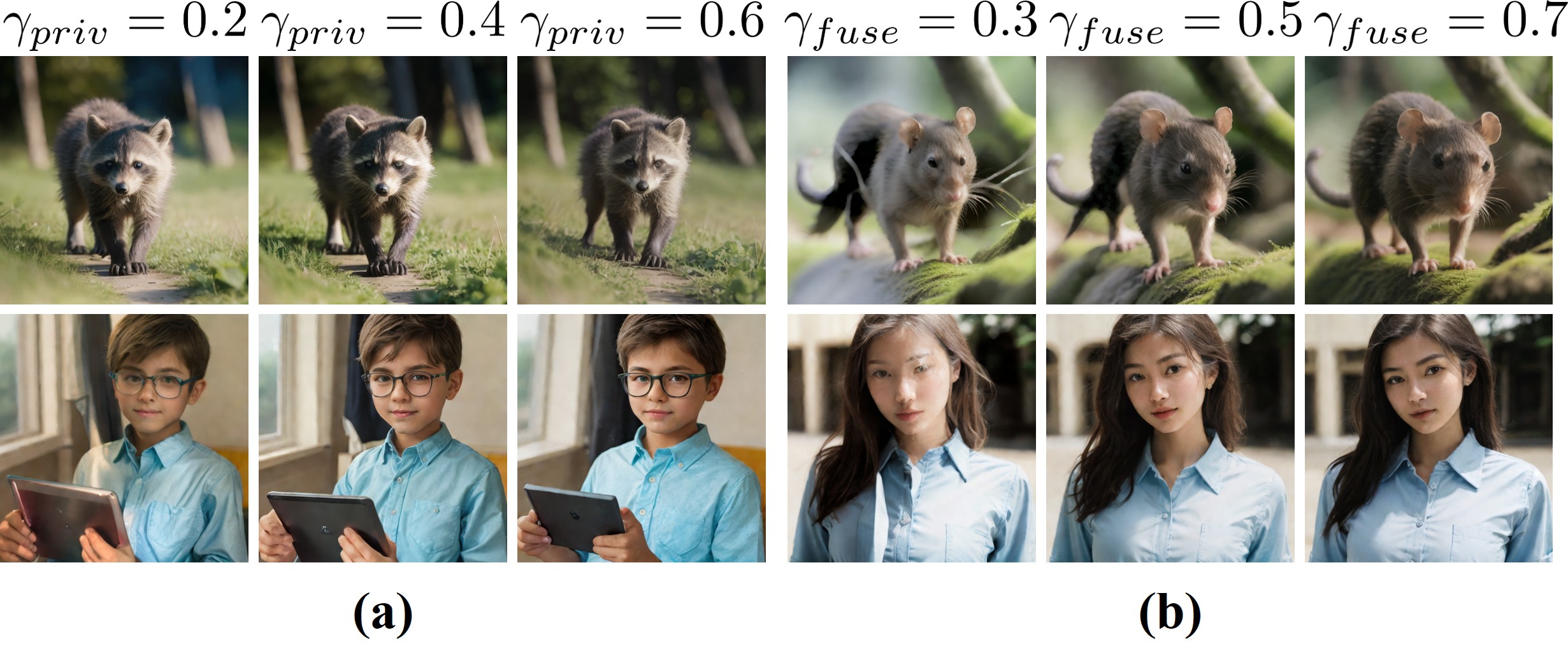}}
\caption{Visual comparisons of stego images generated by MIDAS with varying (a) $\gamma_{priv}$ and (b) $\gamma_{fuse}$.}
\label{abl:gamma}
\end{center}
\end{figure}

\setlength{\tabcolsep}{2pt}
\begin{table}[ht]
\caption{Performance analysis of MIDAS with varying $\gamma_{priv}$.}
\label{tab:gamma_priv}
\begin{center}
\begin{small}
\begin{sc}
\begin{tabular}{ccccc}
\toprule
\normalfont $\gamma_{priv}$ & \normalfont MANIQA$\uparrow$ & \normalfont CLIP Score$\uparrow$ & \normalfont PSNR$_\text{Correct}$$\uparrow$ & \normalfont PSNR$_\text{Wrong}$$\downarrow$\\
\midrule
\normalfont 0.2 & 0.406 & 29.923 & 23.840 &	13.875 \\
\normalfont 0.4 & 0.434 & 30.129 & 23.903 & 9.964 \\
\normalfont 0.6 & 0.451 & 30.181 & 23.921 & 8.153 \\
\bottomrule
\end{tabular}
\end{sc}
\end{small}
\end{center}
\vskip -0.1in
\end{table}

\setlength{\tabcolsep}{2pt}
\begin{table}[!h]
\caption{Performance analysis of MIDAS with varying $\gamma_{fuse}$.}
\label{tab:gamma_fuse}
\begin{center}
\begin{small}
\begin{sc}
\begin{tabular}{ccccc}
\toprule
\normalfont $\gamma_{fuse}$ & \normalfont MANIQA$\uparrow$ & \normalfont CLIP Score$\uparrow$ & \normalfont PSNR$_\text{Correct}$$\uparrow$ & \normalfont PSNR$_\text{Wrong}$$\downarrow$\\
\midrule
\normalfont 0.3 & 0.392 & 29.861 & 23.899 &	10.040 \\
\normalfont 0.5 & 0.434 & 30.129 & 23.903 & 9.964 \\
\normalfont 0.7 & 0.461 & 30.198 & 23.888 & 9.910 \\
\bottomrule
\end{tabular}
\end{sc}
\end{small}
\end{center}
\vskip -0.1in
\end{table}
\clearpage
\newpage

\subsection{Effectiveness of {C}heckpoints.}
\label{appendix:check}

Fig. \ref{abl:check} illustrates the experimental results of MIDAS with an alternative checkpoint while maintaining the same architecture.\footnote{{https://huggingface.co/stablediffusionapi/majicmix-fantasy}\label{fn:checkpoint}} As shown in the figure, MIDAS consistently produces high-quality stego images and successful reconstructions across different checkpoints, demonstrating its strong generalizability.

\begin{figure}[ht]
\begin{center}
\center{\includegraphics[width=3in]{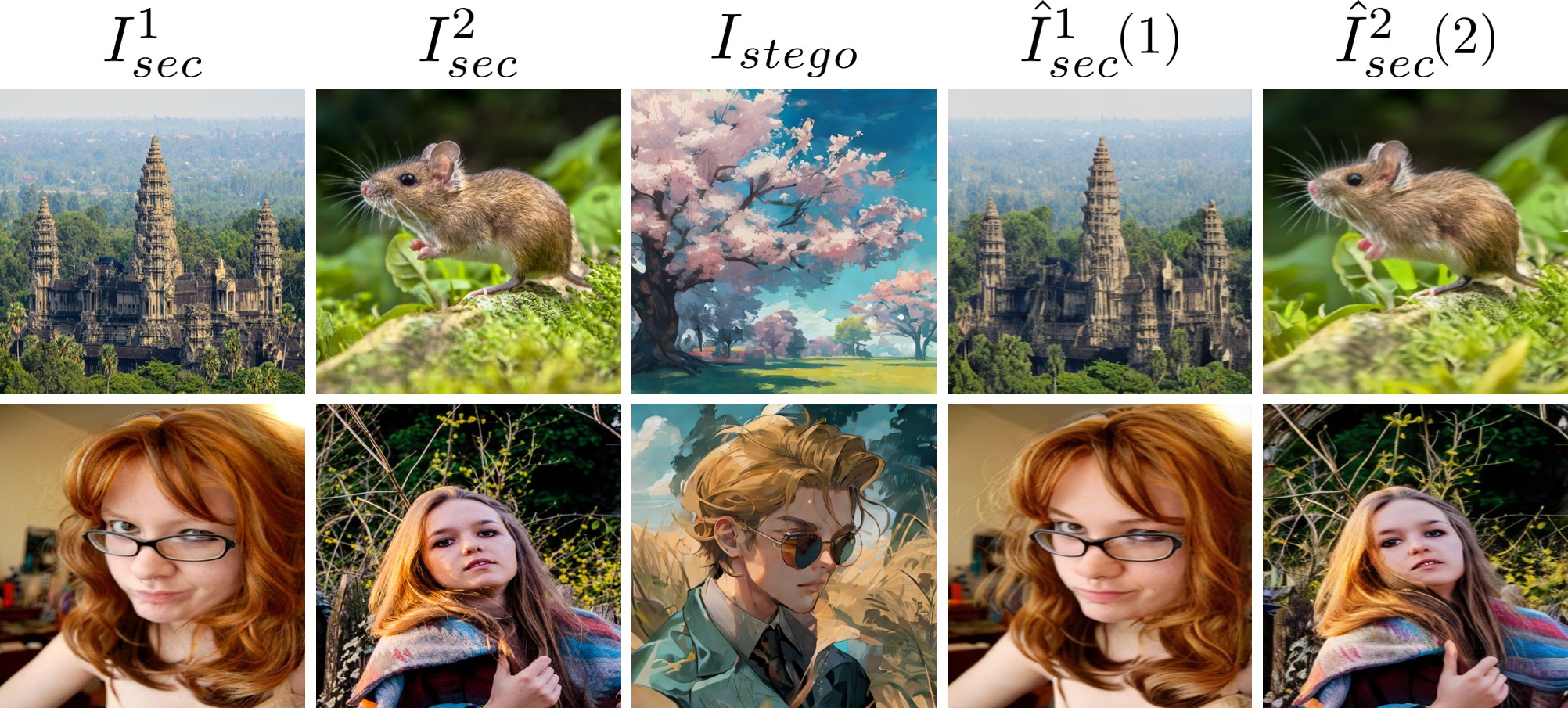}}
\caption{Experimental results of MIDAS using an alternative checkpoint\textsuperscript{\ref{fn:checkpoint}}.}
\label{abl:check}
\end{center}
\end{figure}

\newpage

\section{Proof of Theorem \ref{thm:random_basis}}
\label{appendix:proof}
Before proving Theorem \ref{thm:random_basis}, we first introduce several preliminary results.
\begin{lemma}[\citet{renyi1959}]
\label{lemma:renyi}
    Let $\mathbf{x}$ be a random vector with information dimension $d(\mathbf{x})$ and let $\mathbf{x}^\Delta$ denote the quantized version of $\mathbf{x}$ with quantization step $\Delta$.  
    If the $d(\mathbf{x})$-dimensional entropy $\mathbb H_{d(\mathbf{x})}(\mathbf{x})$ exists,  the entropy of $\mathbf{x}^\Delta$ (with $H(\cdot)$ denoting discrete entropy) is approximated as 
    \begin{equation}
        H(\mathbf{x}^\Delta) \approx \mathbb H_{d(\mathbf{x})}(\mathbf{x}) - d(\mathbf{x}) \log \Delta
    \end{equation}
for sufficiently small $\Delta$,  which follows from the definition of information dimension.
\end{lemma}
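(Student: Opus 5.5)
Since the lemma itself says the claim ``follows from the definition of information dimension,'' I will derive it directly from Rényi's definitions. For a random vector $\mathbf{x}$, the information dimension is defined as
\begin{equation*}
d(\mathbf{x}) = \lim_{\Delta\to 0} \frac{H(\mathbf{x}^\Delta)}{-\log \Delta},
\end{equation*}
assuming the limit exists, where $\mathbf{x}^\Delta$ is the quantization of $\mathbf{x}$ onto the grid $\Delta\mathbb{Z}^m$. The $d$-dimensional entropy is then defined, when the limit exists and is finite, as
\begin{equation*}
\mathbb{H}_{d(\mathbf{x})}(\mathbf{x}) = \lim_{\Delta\to 0}\Bigl(H(\mathbf{x}^\Delta) + d(\mathbf{x})\log\Delta\Bigr).
\end{equation*}

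The argument has three steps. \textbf{First}, I state these two definitions explicitly and note that the lemma's hypothesis ``$\mathbb{H}_{d(\mathbf{x})}(\mathbf{x})$ exists'' means precisely that this second limit exists and is finite. Existence of $\mathbb{H}_{d(\mathbf{x})}$ already forces the first limit to equal $d(\mathbf{x})$: dividing the second expression by $-\log\Delta$ and letting $\Delta\to 0$ shows the ratio converges to $d(\mathbf{x})$.

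\textbf{Second}, I unfold the limit. For every $\varepsilon>0$ there is a $\Delta_0$ such that for all $\Delta<\Delta_0$,
\begin{equation*}
\bigl|H(\mathbf{x}^\Delta) + d(\mathbf{x})\log\Delta - \mathbb{H}_{d(\mathbf{x})}(\mathbf{x})\bigr| < \varepsilon .
\end{equation*}
Rearranging gives $H(\mathbf{x}^\Delta) = \mathbb{H}_{d(\mathbf{x})}(\mathbf{x}) - d(\mathbf{x})\log\Delta + o(1)$ as $\Delta\to 0$. This is exactly the meaning of ``$\approx$'' for sufficiently small $\Delta$: the approximation error vanishes, while the main term $-d(\mathbf{x})\log\Delta$ diverges.

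\textbf{Third}, I comment on two sanity-check cases, which also cover the situations used later in the proof. For an absolutely continuous $\mathbf{x}\in\mathbb{R}^m$, $d(\mathbf{x})=m$ and $\mathbb{H}_m$ equals the differential entropy. For a discrete $\mathbf{x}$, $d=0$ and $\mathbb{H}_0 = H(\mathbf{x})$.

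The main obstacle is the first step: making sure the ratio limit defining $d(\mathbf{x})$ and the offset limit defining $\mathbb{H}_{d(\mathbf{x})}$ are consistent, and that the $\Delta$-grid convention (including its base point) does not affect the limit. If needed, I would handle the grid-convention issue by appealing to Rényi's original result that these limits are insensitive to it, at least along $\Delta = 1/n$.
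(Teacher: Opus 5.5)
Your proposal is correct and is essentially the paper's own argument. The paper simply asserts that the approximation follows from Rényi's definitions, and your second step makes this explicit by rewriting the defining limit $\mathbb H_{d(\mathbf x)}(\mathbf x)=\lim_{\Delta\to 0}\bigl(H(\mathbf x^\Delta)+d(\mathbf x)\log\Delta\bigr)$ as $H(\mathbf x^\Delta)=\mathbb H_{d(\mathbf x)}(\mathbf x)-d(\mathbf x)\log\Delta+o(1)$.

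The grid-convention concern you flag is not needed, because the hypothesis is stated for the same quantizer $\mathbf x^\Delta$. Note also that in general only the dimension limit, not the offset limit, is insensitive to the grid. Finally, your sanity checks tacitly require $H(\lfloor\mathbf x\rfloor)<\infty$ and finite entropy.
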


\begin{lemma}
\label{lemma:sphere}
    Let $\mathbf{x} \in \mathbb{R}^{n}$ be uniformly distributed on the surface of the $n$-dimensional sphere of radius $r$, with respect to the surface measure. Then
    \begin{equation}
        \mathbb H_{ d(\mathbf{x}) }(\mathbf{x}) 
        = \log\left( \frac{2\pi^{\frac{n}{2}}}{\Gamma\left(\frac{n}{2}\right)} r^{n-1} \right), 
        \quad d(\mathbf{x}) = n-1.
    \end{equation}
    This follows since $\mathbf{x}$ is uniformly distributed with respect to the $(n-1)$-dimensional surface measure, whose total mass equals the surface area of the sphere.
\end{lemma}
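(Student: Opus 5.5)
We would prove Lemma~\ref{lemma:sphere} in three steps: compute the surface area of the sphere, show that $d(\mathbf{x})=n-1$ from the scaling of the quantized entropy, and identify the $(n-1)$-dimensional entropy with the differential entropy of $\mathbf{x}$ relative to the surface measure. Throughout, let $S_r=\{\mathbf{x}\in\mathbb{R}^n:\|\mathbf{x}\|=r\}$, let $\sigma$ be its $(n-1)$-dimensional surface (Hausdorff) measure, and let $A_n(r)=\sigma(S_r)$.

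\emph{Step 1: surface area.} Compute $\int_{\mathbb{R}^n}e^{-\|\mathbf{y}\|^2}d\mathbf{y}$ in two ways. As a product of one-dimensional Gaussian integrals it equals $\pi^{n/2}$. In polar coordinates it equals $A_n(1)\int_0^\infty \rho^{n-1}e^{-\rho^2}d\rho=A_n(1)\,\Gamma(n/2)/2$. Hence $A_n(1)=2\pi^{n/2}/\Gamma(n/2)$. Since $\sigma$ is $(n-1)$-homogeneous under dilation, $A_n(r)=r^{n-1}A_n(1)$, which is the quantity inside the logarithm.

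\emph{Step 2: information dimension.} The law of $\mathbf{x}$ has constant density $f\equiv 1/A_n(r)$ with respect to $\sigma$. Partition $\mathbb{R}^n$ into cubes of side $\Delta$. Because $S_r$ is a compact smooth hypersurface:
\begin{itemize}
\item the number of cubes meeting $S_r$ is $\Theta(A_n(r)\Delta^{-(n-1)})$;
\item each such cube $Q$ satisfies $\sigma(Q\cap S_r)\le c_n\Delta^{n-1}$, by local flatness of the sphere at scale $\Delta\ll r$.
\end{itemize}
The first bound gives $H(\mathbf{x}^\Delta)\le (n-1)\log(1/\Delta)+O(1)$, since entropy is at most the log of the number of occupied cells. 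The second gives $H(\mathbf{x}^\Delta)\ge (n-1)\log(1/\Delta)-O(1)$, since every cell then has mass $O(\Delta^{n-1})$. Therefore $H(\mathbf{x}^\Delta)/\log(1/\Delta)\to n-1$, that is, $d(\mathbf{x})=n-1$.

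\emph{Step 3: the $(n-1)$-dimensional entropy.} With $d(\mathbf{x})=n-1$, we take $\mathbb{H}_{n-1}(\mathbf{x})$ to be the differential entropy relative to the $(n-1)$-dimensional measure carrying the distribution. This is the sense in which the lemma is stated ("with respect to the surface measure"), and it is the reading consistent with Lemma~\ref{lemma:renyi}. Then
\[
\mathbb{H}_{n-1}(\mathbf{x})=-\int_{S_r}f\log f\,d\sigma=\log A_n(r),
\]
and substituting Step 1 gives the claimed formula.

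To connect this with the quantization limit $H(\mathbf{x}^\Delta)+(n-1)\log\Delta$, cover $S_r$ by finitely many small patches, each a graph over a tangent hyperplane with Jacobian close to $1$. On each patch, the cell masses are $\Delta^{n-1}f$ times local geometric factors, and summing $-p\log p$ over cells reproduces $-\int f\log f\,d\sigma$ up to an additive constant.

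\emph{Main obstacle.} The hard step is controlling that additive constant. The intersection volume $\sigma(Q\cap S_r)/\Delta^{n-1}$ depends on how the surface is oriented relative to the grid, so the raw limit of $H(\mathbf{x}^\Delta)+(n-1)\log\Delta$ can differ from $\log A_n(r)$ by a bounded, orientation-dependent term.

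We would handle this in one of two ways:
\begin{itemize}
\item define the quantizer intrinsically, using cells of $\sigma$-area $\Delta^{n-1}$ on the surface, in which case the limit is exactly $\log A_n(r)$;
\item observe that any such discrepancy is $O(1)$, uniformly bounded in $\Delta$.
\end{itemize}
The second observation is enough for the use of this lemma in Theorem~\ref{thm:random_basis}, since that theorem only asserts an $O(\cdot)$ scaling in $\Delta$ and $m$.
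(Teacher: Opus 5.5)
Your proposal is correct, and its core (Step~3) is exactly the paper's one-line justification: the density is $1/A_n(r)$ with respect to surface measure, so the entropy is $\log A_n(r)$. Steps~1--2 supply the surface-area formula and the value $d(\mathbf{x})=n-1$, which the paper simply asserts. Your caveat that grid quantization in Lemma~\ref{lemma:renyi} shifts $H(\mathbf{x}^\Delta)+(n-1)\log\Delta$ by an orientation-dependent constant is accurate and is a point the paper glosses over; for the theorem, though, you would need that constant to be at most $O(\log n)$, not merely bounded in $\Delta$, since the bound is stated in terms of the dimension.
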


\begin{lemma}
\label{lemma:chi}
    Let $r\sim \chi_n$, where $\chi_n$ denotes the chi distribution with $n$ degrees of freedom. Then,
    \begin{equation}
        \mathbb H_{d(r)}(r) =\log \left(\frac{\Gamma\left(\frac n 2\right)}{\sqrt 2}\right)-\frac{n-1}{2}\psi\left(\frac {{n}} 2\right)+\frac n 2, \quad d(r) = 1.
    \end{equation}
    where $\psi(\cdot)$ denotes the digamma function. Moreover, for large $n$, 
    \begin{equation}
        \mathbb H_{d(r)}(r) \approx O(\log n).
    \end{equation}
\end{lemma}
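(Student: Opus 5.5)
The plan is to compute the differential entropy of $r\sim\chi_n$ directly from its density, and then read off the large-$n$ behaviour from Stirling-type expansions.

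\emph{Reduction to differential entropy.} Since $r$ has a density on $(0,\infty)$, its information dimension is $d(r)=1$. By R\'enyi's definition, as used in Lemma~\ref{lemma:renyi}, the $1$-dimensional entropy $\mathbb H_1(r)$ then coincides with the ordinary differential entropy
\[
h(r)=-\mathbb E[\log f(r)].
\]
So it suffices to compute $h(r)$ with the chi density
\[
f(r)=\frac{r^{n-1}e^{-r^2/2}}{2^{\frac n2-1}\,\Gamma\!\left(\frac n2\right)},\qquad r>0.
\]

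\emph{Exact formula.} Taking $-\log f$ and expectations gives
\[
h(r)=\log\!\left(2^{\frac n2-1}\Gamma\!\left(\tfrac n2\right)\right)-(n-1)\,\mathbb E[\log r]+\tfrac12\,\mathbb E[r^2].
\]
The needed moments follow from $r^2\sim\chi^2_n$:
\begin{itemize}
\item $\mathbb E[r^2]=n$, so the last term is $\frac n2$.
\item $r^2/2\sim\mathrm{Gamma}(\frac n2,1)$, and the log-moment of a Gamma variable is the digamma function. Hence $\mathbb E[\log r^2]=\log 2+\psi(\frac n2)$, i.e. $\mathbb E[\log r]=\frac12\bigl(\log 2+\psi(\frac n2)\bigr)$.
\end{itemize}
Substituting, the powers of $2$ combine as $(\frac n2-1)\log 2-\frac{n-1}{2}\log 2=-\frac12\log 2$. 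This yields exactly $\log\bigl(\Gamma(\frac n2)/\sqrt2\bigr)-\frac{n-1}{2}\psi(\frac n2)+\frac n2$. This step is routine.

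\emph{Asymptotics.} This is the only step needing care. Set $x=n/2$ and use
\[
\log\Gamma(x)=\left(x-\tfrac12\right)\log x-x+\tfrac12\log 2\pi+O(1/x),\qquad \psi(x)=\log x-\tfrac1{2x}+O(1/x^2).
\]
Since $\frac{n-1}{2}=x-\frac12$, the terms $(x-\frac12)\log x$ cancel exactly. The remaining terms $-x$ and $+\frac n2=x$ also cancel. What is left is
\[
\tfrac12\log 2\pi-\tfrac12\log 2+\frac{x-\frac12}{2x}+O(1/x)=\tfrac12\log(\pi e)+O(1/n).
\]
So $\mathbb H_{d(r)}(r)$ is in fact bounded, and a fortiori $O(\log n)$ as claimed.

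As a sanity check, this limit matches the heuristic that $r$ concentrates near $\sqrt n$ with variance tending to $\frac12$: a Gaussian with variance $\frac12$ has entropy $\frac12\log(2\pi e\cdot\frac12)=\frac12\log(\pi e)$.

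The main obstacle is bookkeeping: checking that the leading $x\log x$ and linear terms cancel, so that no residual growth in $n$ survives. If one only wants the stated $O(\log n)$ bound, the cruder estimate $\psi(x)=\log x+O(1/x)$ already suffices, because any leftover term is at most logarithmic.
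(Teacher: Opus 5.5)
Your proof is correct. The paper gives no derivation of this lemma at all; it simply quotes the standard closed form for the entropy of the chi distribution and asserts the $O(\log n)$ growth. Your computation supplies exactly the omitted argument: the chi density, $\mathbb E[r^2]=n$, $\mathbb E[\log r]=\frac12\bigl(\log 2+\psi(\frac n2)\bigr)$ from the Gamma log-moment, and the collapse of the powers of $2$ to $-\frac12\log 2$. As a sanity check, at $n=1$ the formula returns the half-normal entropy $\frac12\log\frac{\pi e}{2}$.

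Where you genuinely go beyond the paper is the asymptotic claim. Your Stirling/digamma bookkeeping gives $\mathbb H_{d(r)}(r)=\frac12\log(\pi e)+O(1/n)$, so the quantity is bounded, which is strictly sharper than the stated $O(\log n)$.

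If one only wants the statement as written, a shorter route avoids special-function asymptotics entirely. Since $\mathrm{Var}(r)\le\mathbb E[r^2]=n$, the Gaussian maximum-entropy inequality gives $h(r)\le\frac12\log\bigl(2\pi e\,\mathrm{Var}(r)\bigr)\le\frac12\log(2\pi e\,n)$. In fact $\mathrm{Var}(r)<\frac12$ for every $n$, by Watson's bound $\Gamma(x+1)/\Gamma(x+\frac12)>\sqrt{x+\frac14}$ with $x=\frac{n-1}{2}$. So the same inequality even recovers your constant $\frac12\log(\pi e)$ as a uniform upper bound.

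Your sharper form also shows that the $\log d$ term in step (d) of Proposition~\ref{prop:mi} is unnecessary, and hence so is the $\log m$ term in Theorem~\ref{thm:random_basis}. This is harmless for the paper, since those are upper bounds and the term is divided by $m$.

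One small point in your reduction step: R\'enyi's identification $d(r)=1$, $\mathbb H_1(r)=h(r)$ for an absolutely continuous variable needs $H(\lfloor r\rfloor)<\infty$. This holds here because $r$ has finite mean.
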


\begin{lemma}[\citet{entropy}]
\label{lemma:sum_entropy}
    Let $\mathbf x$ and $\mathbf y$ be independent random variables with finite dimensional entropies $\mathbb H_{d(\mathbf x)}(\mathbf x)$ and $\mathbb H_{d(\mathbf y)}(\mathbf y)$, respectively. Then,
    \begin{equation}
        \mathbb H_{d(\mathbf x, \mathbf y)}(\mathbf x, \mathbf y)=\mathbb H_{d(\mathbf x)}(\mathbf x) + \mathbb H_{d(\mathbf y)}(\mathbf y),\quad d(\mathbf x, \mathbf y) = d(\mathbf x) + d(\mathbf y).
    \end{equation}
\end{lemma}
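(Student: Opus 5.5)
The plan is to argue directly from the quantization-based definitions of information dimension and dimensional entropy underlying Lemma~\ref{lemma:renyi}. Recall that for a random vector $\mathbf{x}$ one sets
\begin{equation*}
d(\mathbf{x}) = \lim_{\Delta\to 0}\frac{H(\mathbf{x}^\Delta)}{-\log\Delta},
\qquad
\mathbb H_{d(\mathbf{x})}(\mathbf{x}) = \lim_{\Delta\to 0}\bigl(H(\mathbf{x}^\Delta) + d(\mathbf{x})\log\Delta\bigr),
\end{equation*}
where $\mathbf{x}^\Delta$ is the coordinatewise uniform quantization with step $\Delta$. The hypotheses of the lemma are precisely that both limits exist and are finite for $\mathbf{x}$ and for $\mathbf{y}$. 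The goal is to show that the same limits exist for the pair $(\mathbf{x},\mathbf{y})$ and equal the sums.

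The first step is structural. Because the quantizer acts coordinatewise on a uniform grid, quantizing the concatenated vector gives exactly $(\mathbf{x},\mathbf{y})^\Delta = (\mathbf{x}^\Delta,\mathbf{y}^\Delta)$. Each quantized component is a measurable function of one variable, so independence of $\mathbf{x}$ and $\mathbf{y}$ passes to $\mathbf{x}^\Delta$ and $\mathbf{y}^\Delta$. Additivity of discrete entropy for independent discrete variables then yields the exact identity
\begin{equation*}
H\bigl((\mathbf{x},\mathbf{y})^\Delta\bigr) = H(\mathbf{x}^\Delta) + H(\mathbf{y}^\Delta)
\end{equation*}
for every $\Delta>0$, with no approximation involved.

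The second step is to pass to the limit. Dividing the identity by $-\log\Delta$ and letting $\Delta\to 0$, the right-hand side is a sum of two convergent terms. Hence the limit defining $d(\mathbf{x},\mathbf{y})$ exists and equals $d(\mathbf{x})+d(\mathbf{y})$. Next I subtract $\bigl(d(\mathbf{x})+d(\mathbf{y})\bigr)\log(1/\Delta)$, which equals $d(\mathbf{x},\mathbf{y})\log(1/\Delta)$, from both sides. This splits the joint expression into $\bigl(H(\mathbf{x}^\Delta)+d(\mathbf{x})\log\Delta\bigr) + \bigl(H(\mathbf{y}^\Delta)+d(\mathbf{y})\log\Delta\bigr)$, and each bracket converges to the corresponding dimensional entropy. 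This gives $\mathbb H_{d(\mathbf{x},\mathbf{y})}(\mathbf{x},\mathbf{y}) = \mathbb H_{d(\mathbf{x})}(\mathbf{x})+\mathbb H_{d(\mathbf{y})}(\mathbf{y})$.

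The only delicate point I expect is definitional rather than computational. If the information dimensions are defined through $\liminf$ and $\limsup$, or along a particular sequence such as $\Delta=1/n$, I must use the same convention for the pair as for each factor. The existence assumption on both factors is exactly what ensures that the limit of a sum is the sum of limits. I would also briefly note that the joint grid must be the product grid, which holds because quantization in Lemma~\ref{lemma:renyi} is coordinatewise.
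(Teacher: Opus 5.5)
Your argument is correct. The paper gives no proof of this lemma; it simply imports it from the cited reference. So your proposal does not compete with a different argument but supplies the standard self-contained one: the exact identity $H((\mathbf x,\mathbf y)^\Delta)=H(\mathbf x^\Delta)+H(\mathbf y^\Delta)$ at every $\Delta$, followed by passing to the limit under the two normalizations.

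Two small tightenings are worth making:
\begin{itemize}
\item Invoke additivity of discrete entropy only once $H(\mathbf x^\Delta)$ and $H(\mathbf y^\Delta)$ are known to be finite. Your hypotheses guarantee this for all sufficiently small $\Delta$.
\item Your first limit step is automatic. A finite limit of $H(\mathbf x^\Delta)+d\log\Delta$ already forces $H(\mathbf x^\Delta)/(-\log\Delta)\to d$.
\end{itemize}

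Compared with the bare citation, your route makes the hidden hypotheses explicit. It needs genuine limits along a common sequence of step sizes; with only $\liminf/\limsup$ one gets inequalities, not equality. It also needs the pair to be quantized on the product of the grids used for $\mathbf x$ and $\mathbf y$ in their own coordinates.

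That last point matters for how the paper uses the lemma in Proposition~\ref{prop:mi}. Additivity for $(r,\boldsymbol\theta)$ and $(r,\boldsymbol\theta,\boldsymbol\theta_Q)$ is fine there. However, step (b) replaces the dimensional entropy of $\mathbf z$, quantized on the Cartesian grid, by that of its polar coordinates. This is a separate change-of-variables claim that this lemma does not provide.
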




\begin{lemma}[\citet{Vershynin_2018}]
\label{lemma:polar}
    Let $\mathbf{x} \sim \mathcal{N}(0, I_{{n}})$ and represent it in polar form as
    \begin{equation}
        \mathbf{x} = r \boldsymbol{\theta},
    \end{equation}
    where $r = \|\mathbf{x}\|_2$ and $\boldsymbol{\theta} = \mathbf{x} / \|\mathbf{x}\|_2$.
    Then:
    \begin{enumerate}
        \item $r$ and $\boldsymbol{\theta}$ are independent.
        \item $\boldsymbol{\theta}$ is uniformly distributed on the surface of unit sphere $S^{{{n}}-1}$.
    \end{enumerate}
   Moreover, $r \sim \chi_{n}$.
\end{lemma}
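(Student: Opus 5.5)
The plan is to write the joint law of $(r,\boldsymbol{\theta})$ explicitly through the polar change of variables and observe that it factorizes. Since $\mathbf{x}\sim\mathcal N(0,I_n)$ has density $f(\mathbf{x})=(2\pi)^{-n/2}\exp(-\|\mathbf{x}\|_2^2/2)$, the density depends on $\mathbf{x}$ only through $\|\mathbf{x}\|_2$. The event $\{\mathbf{x}=0\}$ has probability zero, so $\boldsymbol{\theta}=\mathbf{x}/\|\mathbf{x}\|_2$ is almost surely well defined. I will therefore work on $\mathbb R^n\setminus\{0\}$, which is in bijection with $(0,\infty)\times S^{n-1}$ via $\mathbf{x}=r\boldsymbol{\theta}$.

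The key tool is the polar-coordinates formula for Lebesgue measure, $d\mathbf{x}=r^{n-1}\,dr\,d\sigma(\boldsymbol{\theta})$. Here $\sigma$ is the surface measure on $S^{n-1}$, with total mass $|S^{n-1}|=2\pi^{n/2}/\Gamma(n/2)$; this is the constant appearing in Lemma~\ref{lemma:sphere} with $r=1$.

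Take Borel sets $A\subset(0,\infty)$ and $B\subset S^{n-1}$. The formula gives
\begin{equation*}
\Pr(r\in A,\ \boldsymbol{\theta}\in B)=\int_A\int_B (2\pi)^{-n/2}e^{-s^2/2}s^{n-1}\,d\sigma(\boldsymbol{\theta})\,ds
=\left(\int_A c_n s^{n-1}e^{-s^2/2}\,ds\right)\cdot\frac{\sigma(B)}{|S^{n-1}|},
\end{equation*}
where $c_n=(2\pi)^{-n/2}|S^{n-1}|=2^{1-n/2}/\Gamma(n/2)$. The integrand is a product of a function of $s$ and a constant in $\boldsymbol{\theta}$, so the joint law is a product measure. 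From this single identity all three claims follow:
\begin{itemize}
\item Setting $A=(0,\infty)$ shows $\Pr(\boldsymbol{\theta}\in B)=\sigma(B)/|S^{n-1}|$, so $\boldsymbol{\theta}$ is uniform on $S^{n-1}$. This uses the normalization $\int_0^\infty c_n s^{n-1}e^{-s^2/2}\,ds=1$, which follows from the substitution $u=s^2/2$ and the Gamma integral.
\item Setting $B=S^{n-1}$ shows that $r$ has density $\frac{2^{1-n/2}}{\Gamma(n/2)}s^{n-1}e^{-s^2/2}$, which is exactly the $\chi_n$ density.
\item The product form then gives independence of $r$ and $\boldsymbol{\theta}$.
\end{itemize}
As a sanity check, $r^2=\sum_i x_i^2\sim\chi^2_n$ independently confirms the law of $r$.

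The main obstacle is justifying the polar change of variables rigorously rather than by appeal to the Jacobian. I would first try the route of defining $\sigma$ through Lebesgue measure of cones, $\sigma(B):=n\,\lambda(\{t\boldsymbol{\theta}:0<t\le1,\ \boldsymbol{\theta}\in B\})$. Scaling of Lebesgue measure then gives the mass of the truncated cone $\{t\boldsymbol{\theta}: t\in(a,b],\ \boldsymbol{\theta}\in B\}$ as $\sigma(B)(b^n-a^n)/n=\sigma(B)\int_a^b s^{n-1}\,ds$. A $\pi$–$\lambda$ argument extends this from rectangles $(a,b]\times B$ to all product-measurable sets, which is all the computation needs. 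This cone definition of $\sigma$ coincides with the surface measure used in Lemma~\ref{lemma:sphere}.

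If that becomes cumbersome, there is an alternative for uniformity alone. For any orthogonal $U$, the rotation invariance $U\mathbf{x}\overset{d}{=}\mathbf{x}$ gives $(r,U\boldsymbol{\theta})\overset{d}{=}(r,\boldsymbol{\theta})$. So the conditional law of $\boldsymbol{\theta}$ given $r$ is rotation-invariant, and by uniqueness of the rotation-invariant probability measure on $S^{n-1}$ it is uniform for every value of $r$. That yields both uniformity and independence, leaving the law of $r$ to the $\chi^2_n$ computation.
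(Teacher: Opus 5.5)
Your proof is correct. The paper gives no argument of its own for this lemma: it simply cites Vershynin (2018), where the statement appears as an exercise in the setting of rotation invariance of $\mathcal{N}(0,I_n)$. The added clause $r\sim\chi_n$ is just the definition of the chi distribution applied to $r^2=\sum_i x_i^2\sim\chi^2_n$.

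\textbf{Primary route.} Going through the polar integration formula $d\mathbf{x}=r^{n-1}\,dr\,d\sigma(\boldsymbol{\theta})$ is the more explicit option. A single factorization gives uniformity, the $\chi_n$ density and independence at once, and it produces the constant $|S^{n-1}|=2\pi^{n/2}/\Gamma(n/2)$ that Lemma~\ref{lemma:sphere} relies on. The cost is having to prove the polar-coordinates formula. Your cone construction followed by a $\pi$--$\lambda$ uniqueness argument is the standard way to do that. Since Lebesgue measure on $\mathbb{R}^n$ is infinite, run the uniqueness step on the $\sigma$-finite exhaustion $(0,k]\times S^{n-1}$.

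\textbf{Fallback route.} The rotation-invariance argument is closer to what the citation has in mind. It is coordinate-free and works verbatim for any spherically symmetric law, but it depends on uniqueness of the rotation-invariant probability measure on $S^{n-1}$ and handles $r$ separately.

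One step there needs a small repair. For each fixed $U$, the identity $(r,U\boldsymbol{\theta})\overset{d}{=}(r,\boldsymbol{\theta})$ makes the conditional law of $\boldsymbol{\theta}$ given $r$ invariant only for almost every $r$. The exceptional null set depends on $U$, and $O(n)$ is uncountable, so you cannot conclude invariance for every $r$ directly. It is cleaner to condition on events instead. For every Borel $A$ with $\Pr(r\in A)>0$, the measure $B\mapsto\Pr(\boldsymbol{\theta}\in B\mid r\in A)$ is invariant under every $U$, hence equals $\sigma(B)/|S^{n-1}|$. This gives uniformity and independence simultaneously.
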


The following proposition quantifies the information leakage after applying the Random Basis mechanism.

\begin{proposition}
\label{prop:mi}
    Let $\mathbf{z} \sim \mathcal{N}(0, I_d)$ and let $Q$ be an independent random orthonormal matrix distributed according to the Haar measure on the orthogonal group $O(d)$. Then, for sufficiently large $d$ and sufficiently small $\Delta$,
    \begin{equation}
        I\big(\mathbf{z}^\Delta; (Q\mathbf{z})^\Delta\big) \approx O(-\log {\Delta} + \log d)
    \end{equation}
\end{proposition}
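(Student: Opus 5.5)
We write $\mathbf{z} = r\boldsymbol{\theta}$ in polar form, as in Lemma~\ref{lemma:polar}, so that $r\sim\chi_d$ is independent of $\boldsymbol{\theta}$, which is uniform on $S^{d-1}$. Since $Q$ is Haar-distributed and independent of $\mathbf{z}$, we have $Q\mathbf{z} = r\,(Q\boldsymbol{\theta})$. For any fixed $\boldsymbol{\theta}$, the vector $Q\boldsymbol{\theta}$ is again uniform on $S^{d-1}$, so $Q\boldsymbol{\theta}$ is independent of $(r,\boldsymbol{\theta})$. The structural consequence is that $Q\mathbf{z}$ depends on $\mathbf{z}$ only through the radius $r$.

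The plan is to exploit this and bound the mutual information by the information carried by the radius alone. The random variables form a Markov chain $\mathbf{z}^\Delta \to \mathbf{z} \to r \to Q\mathbf{z} \to (Q\mathbf{z})^\Delta$: conditionally on $r$, the vector $Q\mathbf{z} = r\,\boldsymbol{\phi}$ with $\boldsymbol{\phi}$ uniform on the sphere and independent of $\mathbf{z}$. By the data processing inequality,
\begin{equation*}
I\big(\mathbf{z}^\Delta;(Q\mathbf{z})^\Delta\big) \le I\big(\mathbf{z}^\Delta; r\big).
\end{equation*}
To obtain a finite quantity we pass to the quantized radius. Since $(Q\mathbf{z})^\Delta$ is essentially determined by $r$ only up to resolution $\Delta$, the more careful chain is $(Q\mathbf{z})^\Delta \approx$ function of $(r^{\Delta'},\boldsymbol{\phi})$, which gives the bound $I \lesssim H(r^{\Delta})$.

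By Lemma~\ref{lemma:renyi} with $d(r)=1$, and Lemma~\ref{lemma:chi},
\begin{equation*}
H(r^\Delta) \approx \mathbb H_1(r) - \log\Delta = O(\log d) - \log\Delta,
\end{equation*}
which is the claimed order.

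The main obstacle is making the passage from $r$ to a quantized radius rigorous. The difficulty is that quantization on the Cartesian grid does not commute with the polar decomposition: $(Q\mathbf{z})^\Delta$ is not literally a function of $r^\Delta$ and $\boldsymbol{\phi}$. I would handle this at the same heuristic level as Lemma~\ref{lemma:renyi}, arguing that for sufficiently small $\Delta$ the effective radial resolution of $(Q\mathbf{z})^\Delta$ is of order $\Delta$, with at most $O(\log d)$ extra bits of slack.

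A cross-check is to compute the approximation directly as
\begin{equation*}
I = H\big((Q\mathbf{z})^\Delta\big) - H\big((Q\mathbf{z})^\Delta \mid \mathbf{z}^\Delta\big),
\end{equation*}
using Lemmas~\ref{lemma:renyi}, \ref{lemma:sphere}, \ref{lemma:chi} and~\ref{lemma:sum_entropy}. Conditionally on $\mathbf{z}$, the vector $Q\mathbf{z}$ is uniform on a sphere of radius $r$, with information dimension $d-1$. Its entropy is therefore $\log(\text{area}) - (d-1)\log\Delta$. Unconditionally, $Q\mathbf{z}$ is Gaussian, and by Lemma~\ref{lemma:sum_entropy} its entropy splits into the $\chi_d$ part and the sphere part, with information dimension $d$.

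Subtracting, the sphere terms largely cancel, and $\mathbb{E}[(d-1)\log r]$ cancels against the area factor. What remains is $\mathbb H_1(r) - \log\Delta$. By Lemma~\ref{lemma:chi} this is $O(\log d - \log\Delta)$, giving the same conclusion as the data-processing route.
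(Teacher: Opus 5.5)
Your proposal is sound at the same heuristic level as the paper, but your main route differs from it. The paper computes the leakage as $H(\mathbf{z}^\Delta)+H((Q\mathbf{z})^\Delta)-H(\mathbf{z}^\Delta,(Q\mathbf{z})^\Delta)$ and applies Lemma~\ref{lemma:renyi} to all three terms, the joint one having information dimension $2d-1$. It then rewrites each term in the polar variables $(r,\boldsymbol\theta)$, $(r,\boldsymbol\theta_Q)$, $(r,\boldsymbol\theta,\boldsymbol\theta_Q)$ and uses mutual independence with Lemma~\ref{lemma:sum_entropy}, so everything cancels except $\mathbb H_1(r)-\log\Delta$.

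You instead use the Markov chain $\mathbf{z}^\Delta\to r\to(Q\mathbf{z})^\Delta$ and the data processing inequality. This gives only an upper bound, but that is all the $O(\cdot)$ claim needs. It also isolates why the leakage is dimension-free (the direction is fully scrambled and only the radius is shared), without applying the R\'enyi approximation to the singular joint law.

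Your cross-check is essentially the paper's computation in two-term form. Conditioning on $\mathbf{z}$ rather than $\mathbf{z}^\Delta$ there computes $I(\mathbf{z};(Q\mathbf{z})^\Delta)$, which is again an upper bound, so this is harmless. It is also slightly more careful than the paper's step (b), which equates Cartesian and polar dimensional entropies term by term. The Jacobian factors $r^{d-1}$ in each marginal and $\sqrt2\,r^{2(d-1)}$ in the joint cancel only up to an $O(1)$ constant, and you track that cancellation explicitly.

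As for the obstacle you flag, you do not need a quantized radius. $I(\mathbf{z}^\Delta;r)$ is already finite because $\mathbf{z}^\Delta$ is discrete, and it equals $\mathbb H_1(r)-h(r\mid\mathbf{z}^\Delta)$. Within a cell of side $\Delta$ centered at $c$, and for $\Delta$ small enough that the Gaussian density is nearly constant on the cell, $r\approx\|c\|+\langle c/\|c\|,\mathbf{u}\rangle$ with $\mathbf{u}$ essentially uniform on the cell. The density of this projection is at most $\sqrt2/\Delta$ by Ball's cube-slicing inequality, so $h(r\mid\mathbf{z}^\Delta)\ge\log\Delta-O(1)$, and the claimed order follows without commuting quantization with polar coordinates.

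Both arguments in fact give $-\log\Delta+O(1)$, since $\mathbb H_1(r)\to\tfrac12\log(\pi e)$ for $r\sim\chi_d$; the $\log d$ in Lemma~\ref{lemma:chi} is slack.
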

\begin{proof}
   Let $r=\|\mathbf z\|_2, \;\mathbf z=r\boldsymbol\theta$, and $Q\mathbf z = r\boldsymbol\theta_Q$. Since $Q$ is orthonormal and independent of $\mathbf z\sim\mathcal N(0,I_d)$, we have $Q\mathbf z\sim\mathcal N(0,I_d)$ and $\|Q\mathbf z\|_2=\|\mathbf z\|_2=r$. Moreover, by Lemma~\ref{lemma:polar} and since $Q$ is uniformly random over $O(d)$, $r\sim\chi_d$, both $\boldsymbol\theta$ and $\boldsymbol\theta_Q$ are uniformly distributed on the surface of unit sphere $S^{d-1}$, and $r$, $\boldsymbol\theta$, and $\boldsymbol\theta_Q$ are mutually independent.
    \begin{align}
        I&\left(\mathbf z^\Delta;(Q\mathbf z)^\Delta\right) =H (\mathbf z^\Delta) + H((Q\mathbf z)^\Delta) - H(\mathbf z^\Delta, (Q\mathbf z)^\Delta)\\
        &\overset{(a)}{\approx} \mathbb H_{d(\mathbf z)}(\mathbf z)-d(\mathbf z)\log\Delta 
        +\mathbb H_{d(Q\mathbf z)}(Q\mathbf z)-d(Q\mathbf z)\log\Delta 
        -(\mathbb H_{d(\mathbf z, Q\mathbf z)}(\mathbf z, Q\mathbf z)-d(\mathbf z, Q\mathbf z)\log\Delta)\\
        &\overset{(b)}= \mathbb H_{d(r, \boldsymbol\theta)}(r, \boldsymbol\theta)-d(r, \boldsymbol\theta)\log\Delta 
        +\mathbb H_{d(r, \boldsymbol{\theta}_Q)}(r, \boldsymbol\theta_Q)-d(r, \boldsymbol\theta_Q)\log\Delta 
        -(\mathbb H_{d(r, \boldsymbol\theta, \boldsymbol\theta_Q)}(r, \boldsymbol\theta, \boldsymbol\theta_Q)-d(r, \boldsymbol\theta, \boldsymbol\theta_Q)\log\Delta)\\
        &\overset{(c)}{=} \mathbb H_{d(r)}(r)-d(r)\log\Delta\\
        &\overset{(d)}{\approx} O(-\log \Delta + \log d)
    \end{align} where (a) follows from Lemma~\ref{lemma:renyi}, (b) follows from the polar representations, (c) follows from Lemma~\ref{lemma:sum_entropy}, the mutual independence of $r$, $\boldsymbol\theta$, and $\boldsymbol\theta_Q$, together with Lemmas~\ref{lemma:sphere}, and \ref{lemma:chi}, and (d) follows from Lemma~\ref{lemma:chi}.
\end{proof}

Building on Proposition \ref{prop:mi}, we now prove the Theorem \ref{thm:random_basis}.
\begin{proof}
    For simplicity, we consider the following abstraction of our pipeline:
    \begin{equation}
        I_{sec}\rightarrow \mathbf z_{sec}^\Delta=((\mathbf z_{sec}^{\gamma})^\Delta,(\mathbf z_{sec}^{1-\gamma})^\Delta)\xrightarrow{\text{Random Basis}}\mathbf z_{prot}^\Delta=((Q\mathbf z_{sec}^{\gamma})^\Delta,(\mathbf z_{sec}^{1-\gamma})^\Delta)\rightarrow\hat{I}_{sec}
    \end{equation}
    Here, $\mathbf z_{sec}^\Delta$ denotes the quantized version of the noisy latent representation $\mathbf z_{sec}\sim \mathcal{N}(0,I_{d})$ of the secret image $I_{sec}$. Note that $d<m$, since the latent representation is a downsampled version of the image.  $\mathbf z_{prot}^\Delta$ is the quantized version of the protected latent $\mathbf z_{prot}$. Specifically, $\mathbf z_{sec}^\Delta$ is partitioned into $(\mathbf z_{sec}^{\gamma})^\Delta$ ($\gamma$ fraction of $\mathbf z_{sec}$) and $(\mathbf z_{sec}^{1-\gamma})^\Delta$ ($(1-\gamma)$ fraction of $\mathbf z_{sec}$), where only the former is rotated by a random orthonormal matrix $Q$ (private key) by the Random Basis mechanism. The protected latent $\mathbf z_{prot}^\Delta$ is then observed by the attacker to produce $\hat I_{sec}$.
    
    The mutual information $I(I_{sec};\hat I_{sec})$ is bounded as
    \begin{equation}
        I(I_{sec};\hat I_{sec})\leq I({\mathbf z}_{sec}^\Delta;{\mathbf z}_{prot}^\Delta)
    \end{equation}
    from the data processing inequality. This provides a formal basis for analyzing security in the latent space. By independence, we have 
    \begin{equation}
        I({\mathbf z}_{sec}^\Delta;{\mathbf z}_{prot}^\Delta)
        = I((Q\mathbf z_{sec}^{\gamma})^\Delta;(\mathbf z_{sec}^{\gamma})^\Delta)
        + I((\mathbf z_{sec}^{1-\gamma})^\Delta;(\mathbf z_{sec}^{1-\gamma})^\Delta).
    \end{equation}
    Then, by Proposition \ref{prop:mi} and $d<m$, the first term satisfies
    \begin{equation}
        I((Q\mathbf z_{sec}^{\gamma})^\Delta;(\mathbf z_{sec}^{\gamma})^\Delta)
        \approx  O(-\log {\Delta} + \log {m}).
    \end{equation}
    The second term is
    \begin{align}
        I((\mathbf z_{sec}^{1-\gamma})^\Delta;(\mathbf z_{sec}^{1-\gamma})^\Delta)
        &= H((\mathbf z_{sec}^{1-\gamma})^\Delta) \\
        &{\overset{(a)}{=}} \mathbb H_{(1-\gamma){d}}(\mathbf z_{sec}^{1-\gamma})
        - {d}(\mathbf z_{sec}^{1-\gamma})\log \Delta \\
        &{\overset{(b)}{\approx}}  O(-(1-\gamma)m \log {\Delta} + (1-\gamma)m).
    \end{align}
    where $(a)$ follows from Lemma~\ref{lemma:renyi} and $(b)$ follows from the fact that $\mathbf z_{sec}^{1-\gamma}\sim \mathcal{N}(0, I_{(1-\gamma){d}})$ and $d<m$.
    
    Therefore,
      \begin{equation}
        R_L = \frac 1 {{m}}I(I_{sec};\hat I_{sec})\leq \frac 1 {{m}}I({\mathbf z}_{sec}^\Delta;{\mathbf z}_{prot}^\Delta)
        \approx O\left( \frac{-\log \Delta+ \log {m}}{{m}}+(1-\gamma)(-\log\Delta+1)\right).
    \end{equation}
\end{proof}

\end{document}